\documentclass{article}

\usepackage{microtype}
\usepackage{graphicx}
\usepackage{subcaption}
\usepackage{booktabs} % for professional tables

\usepackage{hyperref}

\usepackage[accepted]{icml2026}

\usepackage{amsmath}
\usepackage{amssymb}
\usepackage{mathtools}
\usepackage{amsthm}

\usepackage[capitalize,noabbrev]{cleveref}

\usepackage{placeins}
\usepackage{booktabs}
\usepackage{float}      % for [H]
\usepackage{algorithm}  % algorithm float

\makeatletter
\@ifpackageloaded{algorithmic}{%

}{}
\makeatother       % defines the 'algorithm' float
\usepackage{algpseudocode}% algorithmicx syntax: \begin{algorithmic}
\usepackage{amssymb,mathtools}
\usepackage{enumitem}
\usepackage{amsthm}
\DeclareMathOperator{\tr}{tr}

\usepackage{tabularx,booktabs,makecell,array}
\usepackage{ragged2e}
\newcolumntype{Y}{>{\RaggedRight\arraybackslash}X}
\newcommand{\ITSPACE}{\textnormal{\textsc{ITSPACE}}}

\theoremstyle{plain}
\newtheorem{theorem}{Theorem}[section]
\newtheorem{proposition}[theorem]{Proposition}
\newtheorem{lemma}[theorem]{Lemma}

\theoremstyle{definition}

\theoremstyle{remark}
\newtheorem{remark}[theorem]{Remark}

\usepackage[disable,textsize=tiny]{todonotes}
\icmltitlerunning{ITSPACE: Monotone Gaussian Optimal Transport Updates}

\begin{document}

\twocolumn[
  \icmltitle{ITSPACE: Monotone Gaussian Optimal Transport Updates}

  % It is OKAY to include author information, even for blind submissions: the
  % style file will automatically remove it for you unless you've provided
  % the [accepted] option to the icml2026 package.

  % List of affiliations: The first argument should be a (short) identifier you
  % will use later to specify author affiliations Academic affiliations
  % should list Department, University, City, Region, Country Industry
  % affiliations should list Company, City, Region, Country

  % You can specify symbols, otherwise they are numbered in order. Ideally, you
  % should not use this facility. Affiliations will be numbered in order of
  % appearance and this is the preferred way.
  \icmlsetsymbol{equal}{*}

  \begin{icmlauthorlist}
    \icmlauthor{Woojoo Na}{Northeastern University}
    \icmlauthor{Jennifer Dy}{Northeastern University}
  \end{icmlauthorlist}

  \icmlaffiliation{Northeastern University}{Department of Computer Engineering, Northeastern University, Boston, USA}
  % \icmlaffiliation{comp}{Company Name, Location, Country}
  % \icmlaffiliation{sch}{School of ZZZ, Institute of WWW, Location, Country}

  \icmlcorrespondingauthor{Woojoo Na}{na.w@northeastern.edu}
  % \icmlcorrespondingauthor{Firstname2 Lastname2}{first2.last2@www.uk}

  % You may provide any keywords that you find helpful for describing your
  % paper; these are used to populate the "keywords" metadata in the PDF but
  % will not be shown in the document
  \icmlkeywords{Machine Learning, ICML}

  \vskip 0.3in
]

% this must go after the closing bracket ] following \twocolumn[ ...

% This command actually creates the footnote in the first column listing the
% affiliations and the copyright notice. The command takes one argument, which
% is text to display at the start of the footnote. The \icmlEqualContribution
% command is standard text for equal contribution. Remove it (just {}) if you
% do not need this facility.

% Use ONE of the following lines. DO NOT remove the command.
% If you have no special notice, KEEP empty braces:
\printAffiliationsAndNotice{}  % no special notice (required even if empty)
% Or, if applicable, use the standard equal contribution text:
% \printAffiliationsAndNotice{\icmlEqualContribution}

\begin{abstract}
Covariance matrices serve as compact descriptors of feature distributions in many machine-learning pipelines, including domain adaptation and Gaussian embeddings. Under a centered Gaussian approximation, the \emph{unregularized} Wasserstein--2 optimal-transport (OT) discrepancy admits a closed form on covariances given by the Bures--Wasserstein (BW) objective on the symmetric positive definite (SPD) cone. We propose \ITSPACE{} (Iterative Transport for Stable Proximal Alignment of Covariance Embeddings), a proximal majorization--minimization method that directly optimizes this exact BW objective through closed-form updates in a square-root factorization. In exact arithmetic, each iteration satisfies a sufficient-decrease inequality for the BW objective; under inexact polar computations, we provide an explicit certificate-gap bound controlling deviations from exact descent. The resulting iterations preserve PSD structure by construction and naturally support rank-restricted factors, making \ITSPACE{} well-suited as a lightweight inner-loop primitive in settings where adaptation must be performed from unlabeled target batches under strict step and compute budgets. Across real-world covariance-alignment benchmarks, \ITSPACE{} reaches low-BW-gap
solutions substantially faster than BW-gradient descent, methods based on other covariance geometries, and entropically regularized sample-OT baselines.
\end{abstract}

\section{Introduction}

Many machine-learning pipelines adapt feature distributions rather than raw
inputs directly.  In unsupervised domain adaptation, for example, distribution
shift is often reduced by matching first- and second-order feature statistics
between a source domain and a target domain.  Correlation alignment (CORAL) and
related methods use covariance alignment as a practical surrogate for aligning
feature distributions \citep{Sun2016CORAL}.  Similar covariance operations also
appear in normalization layers, whitening--recoloring transforms, metric
learning modules, and recent Bures--Wasserstein normalization methods for SPD
features \citep{Wang2025BuresNorm}.  In probabilistic representation learning
and variational inference, Gaussian approximations are frequently used to
represent local uncertainty or latent distributions
\citep{Lambert2022WGFVI,Diao2023FBGVI}.  Recent generative modeling methods
also use Gaussian optimal-transport structure inside Wasserstein flow matching
and Schr\"odinger-bridge formulations \citep{pmlr-v267-haviv25a,pmlr-v206-bunne23a}.
Across these examples, the covariance matrix is not merely a statistic computed
after training.  It is often part of the computational state that must be
updated, stored, and passed to another module.

This paper studies covariance alignment under the Bures--Wasserstein (BW)
distance.  The BW distance is the closed-form Wasserstein--2 distance between
centered Gaussian distributions, written directly in terms of their covariance
matrices \citep{Gelbrich1990W2Gaussian,Takatsu2011W2Gauss,Bhatia2019Bures}.
Thus, when a feature distribution is represented by a Gaussian approximation, or
when a method explicitly updates a covariance matrix, minimizing the BW distance
is equivalent to minimizing the exact unregularized Gaussian optimal transport
objective.  This is different from entropic optimal transport on empirical
samples, which solves a regularized sample-transport problem
\citep{SinkhornKnopp1967,Cuturi2013Sinkhorn,Peyre2019OT}.  We do not assume
that raw datasets such as images or feature clouds are globally Gaussian.  The
setting considered here is more specific: the object being updated is a
covariance matrix, or a low-rank factor representing that covariance.

\begin{figure}[t]
    \centering
    \includegraphics[width=0.95\linewidth]{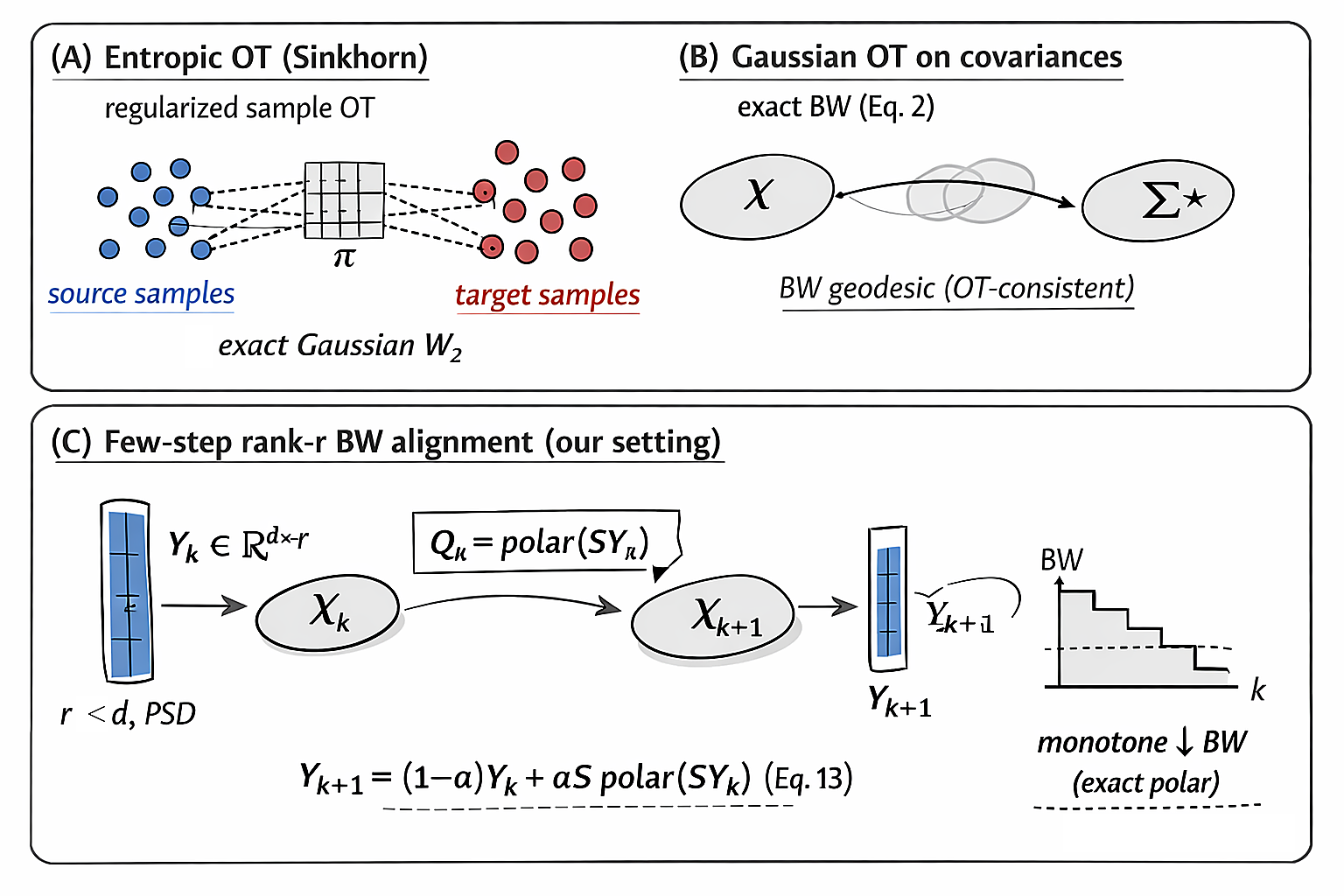}
    \caption{\textbf{Entropic OT vs.\ Gaussian OT on covariances.}
    Entropic OT solves a regularized sample-transport problem on empirical measures. In contrast, centered Gaussian OT has a closed-form
    Bures--Wasserstein objective on covariance matrices, and its geodesics are exact Gaussian displacement interpolations. This work focuses on updating low-rank covariance factors under this exact Gaussian OT objective.}
    \label{fig:ot_vs_gaussian_ot}
\end{figure}

At first glance, covariance alignment may seem trivial.  If the target
covariance $\Sigma_\star$ is available, why not simply replace the current
covariance by $\Sigma_\star$?  This is not the setting faced by many adaptation
and representation-learning pipelines.  During training, test-time adaptation,
or minibatch-based domain correction, the model may maintain only a compact
covariance state rather than an arbitrary dense $d\times d$ matrix.  A common
representation is a low-rank factorization
\[
    X = YY^\top, \qquad Y\in\mathbb{R}^{d\times r}, \qquad r\ll d,
\]
which guarantees that $X$ is positive semidefinite and has rank at most $r$.
Such a factor is cheaper to store and update, and it can be inserted directly
into downstream normalization, whitening, metric, or covariance-correction
modules.  The challenge is therefore not to write down the full target
covariance once.  The challenge is to produce valid low-rank covariance updates
that move toward $\Sigma_\star$ under the BW objective after only a small number
of steps.

This limited-update setting is important because many ML systems cannot afford
to run a full matrix optimization routine inside every adaptation loop.  Target
statistics may change across domains, minibatches, or time, and the updated
covariance may be consumed immediately by a downstream module.  Consequently,
intermediate iterates matter: after one or two updates, the covariance should
already be positive semidefinite, rank constrained, and closer to the target
under the exact Gaussian Wasserstein objective.  Post-hoc projection is not an
ideal solution, because it can add substantial cost and may break the connection
between the update step and the objective being optimized.

Several existing methods address related problems, but they do not
simultaneously provide a cheap low-rank factor update and descent under the
exact BW objective.  The BW geodesic gives the exact Gaussian optimal transport
displacement interpolation between two Gaussian measures
\citep{Takatsu2011W2Gauss,Bhatia2019Bures}.  However, it is an interpolation
between covariance matrices and does not directly provide an iterative
rank-constrained factor update.  Gradient-based and Riemannian methods can
optimize the BW objective on SPD matrices \citep{Han2021BWOpt,Fan2024ProjectedBWGD},
but they typically require repeated dense matrix square roots, inverses,
logarithms, or projections.  Euclidean, log-Euclidean, and affine-invariant
Riemannian updates are useful geometries for positive definite matrices, but
they optimize different objectives.  Entropic OT methods such as Sinkhorn are
powerful for empirical measures, but their regularized objective is not the
closed-form BW distance between Gaussian covariances.

We propose \ITSPACE{} (Iterative Transport for Stable Proximal Alignment
of Covariance Embeddings), a majorization--minimization method for
rank-constrained covariance alignment under the BW distance.  The method works
directly with the low-rank factor $Y$.  Using the identity
\[
    W_2^2(YY^\top,\Sigma_\star)
    =
    \|Y\|_F^2 + \operatorname{tr}(\Sigma_\star)
    - 2\|\Sigma_\star^{1/2}Y\|_\ast,
\]
the BW objective becomes a quadratic term minus a nuclear norm.  At each
iteration, \ITSPACE{} replaces the nuclear norm by a tight linear
certificate obtained from a polar/Procrustes problem, adds a proximal term, and
minimizes the resulting quadratic upper bound in closed form.  The resulting
update moves the current factor toward an aligned square root of the target
covariance, while every covariance iterate remains positive semidefinite and
rank constrained by construction.

Our contributions are as follows.
\begin{itemize}
    \item \textbf{A low-rank update for BW covariance alignment.}
    We derive a closed-form factor update for aligning a covariance
    $X=YY^\top$ to a target covariance $\Sigma_\star$ under the exact Gaussian
    Wasserstein objective.  Every iterate remains positive semidefinite and rank
    constrained by construction.

    \vspace{-1.5mm}
    \item \textbf{Descent guarantees for the true BW objective.}
    We prove monotone decrease of the BW objective under exact polar
    computation, strengthen this guarantee with a sufficient-decrease bound, and
    quantify the effect of approximate polar computations.  We also include a
    matched-rank positive semidefinite extension in the appendix, together with
    the certificate condition needed in singular cases.

    \vspace{-1.5mm}
    \item \textbf{Empirical evaluation under limited update budgets.}
    Across real feature-covariance alignment tasks and downstream covariance-drift correction tasks, \ITSPACE{} reaches BW alignment targets substantially faster than BW gradient descent and achieves competitive downstream recovery under the same rank constraint. In our downstream drift-correction experiments, one or two unlabeled target-batch updates recover a substantial fraction of the drift-induced degradation, including gains of $+3.8$ AUROC on Camelyon17 and $+5.1$ accuracy on Terra.
\end{itemize}

\section{Background and Related Work}
\label{sec:related-background}

\subsection{Sample OT vs.\ Gaussian OT}

Optimal transport can be applied at different levels of representation.  In
sample-based OT, the inputs are empirical measures supported on data points.  In
Gaussian OT, the inputs are Gaussian distributions, so the transport problem can
be written directly in terms of their means and covariances.  This paper studies
the latter setting: the object being updated is a covariance matrix, or a
low-rank factor representing that covariance.

Given probability measures $\mu,\nu$ on $\mathbb{R}^d$ with finite second
moments, the squared Wasserstein--2 distance is the optimal value of the
Kantorovich problem
\begin{equation}
\label{eq:w2-primal}
W_2^2(\mu,\nu)
\;:=\;
\min_{\pi\in\Pi(\mu,\nu)}
\int \|x-y\|_2^2\, d\pi(x,y),
\end{equation}
where $\Pi(\mu,\nu)$ denotes the set of couplings with marginals $\mu$ and
$\nu$ \citep{Peyre2019OT,villani2009ot}.  Throughout this paper, $W_2$ denotes
this unregularized objective.

Entropic OT modifies~\eqref{eq:w2-primal} by adding an entropy or KL penalty to
the coupling, leading to Sinkhorn iterations
\citep{SinkhornKnopp1967,Cuturi2013Sinkhorn,Genevay2016StochasticOT,
Feydy2019SinkhornDivergences}.  This modification is crucial for scalable
sample-based OT, but it changes the objective.  Sinkhorn therefore solves a
regularized empirical transport problem, whereas our target is the exact
unregularized Gaussian Wasserstein distance available in closed form on
covariances.

For centered Gaussian distributions, the unregularized $W_2$ distance depends
only on covariance matrices.  Let $\mathbb{S}_{++}^d$ denote the cone of
$d\times d$ real symmetric positive definite matrices.  If
$X,\Sigma\in\mathbb{S}_{++}^d$, then
\begin{equation}
\label{eq:bures-def}
W_2^2(X,\Sigma)
\;=\;
\tr(X) + \tr(\Sigma)
\;-\;
2\,\tr\!\left( (\Sigma^{1/2} X \Sigma^{1/2})^{1/2} \right),
\end{equation}
where we write $W_2^2(X,\Sigma)$ for
$W_2^2\!\left(\mathcal{N}(0,X),\mathcal{N}(0,\Sigma)\right)$
\citep{DowsonLandau1982Frechet,Gelbrich1990W2Gaussian,
Takatsu2011W2Gauss,Bhatia2019Bures}.  This is the Gaussian
Bures--Wasserstein (BW) objective on covariances.  Although
\eqref{eq:bures-def} is often stated for positive definite covariances, it
extends continuously to positive semidefinite covariances.  This extension is
essential for the low-rank representation $X=YY^\top$ used below, since such
covariances are generally positive semidefinite rather than positive definite.

For non-Gaussian distributions, the covariance expression in
\eqref{eq:bures-def} should not be interpreted as the full distributional
$W_2$ distance.  Rather, together with mean terms, it appears in
Gelbrich-type lower bounds on $W_2$ and is used here as a second-order
description of distributional discrepancy \citep{Gelbrich1990W2Gaussian}.

\subsection{BW vs.\ Other Geometries on Covariances}

Covariance alignment is widely used to reduce mismatch between feature
distributions.  CORAL and related domain-adaptation methods match second-order
feature statistics across source and target domains \citep{Sun2016CORAL}.
Covariance matrices also appear in whitening and recoloring transforms,
normalization layers, metric-learning modules, and methods for SPD-valued
features \citep{Wang2025BuresNorm}.  These settings motivate algorithms that
update covariances while respecting positive semidefinite structure.

There are several natural geometries on covariance matrices.  Common choices
include the Euclidean or Frobenius geometry, the log-Euclidean geometry, the
affine-invariant Riemannian geometry, and the Bures--Wasserstein geometry
\citep{moakher2005means,Arsigny2006LogEuclidean,Bhatia2007PDM,
Bhatia2019Bures}.  These geometries are all useful, but they define different
objectives.  A step that decreases a Euclidean, log-Euclidean, or
affine-invariant distance between covariances does not necessarily decrease the
Gaussian Wasserstein objective in~\eqref{eq:bures-def}.  Since our goal is
Gaussian OT on covariances, BW is the objective optimized and reported in our
main alignment experiments.

This objective choice also clarifies how to interpret baselines.  Methods based
on other SPD geometries are meaningful covariance-alignment baselines, but they
are not descent methods for the BW objective.  Sinkhorn methods are meaningful
sample-OT baselines, but they solve a regularized empirical transport problem.
Our comparisons therefore distinguish between the target objective, the matrix
representation, and the computational constraints imposed by low-rank
covariance updates.

\subsection{Full-Covariance Paths vs.\ Low-Rank Factor Updates}

Within Gaussian OT, a natural baseline is the BW geodesic.  BW geodesics are
the exact Wasserstein displacement interpolations between Gaussian measures
\citep{Takatsu2011W2Gauss,Bhatia2019Bures}.  They provide a closed-form path
between two covariance matrices and are therefore the Gaussian analogue of OT
displacement interpolation.

However, the BW geodesic is a full-covariance path.  It does not directly solve
the update problem considered in this paper, where the maintained state is a
low-rank factor
\[
    X = YY^\top,\qquad Y\in\mathbb{R}^{d\times r},\qquad r\ll d.
\]
A geodesic step can be followed by rank truncation, but that introduces an
extra projection and does not give a closed-form descent step for the
rank-constrained factor objective.

Gradient-based and Riemannian methods can also optimize the BW objective on the
SPD cone \citep{Han2021BWOpt,Fan2024ProjectedBWGD}.  These methods target the
correct Gaussian Wasserstein distance, but they typically involve repeated
dense matrix square roots, inverses, logarithms, linear solves, or projections
when a rank constraint is imposed.  They may also require step-size tuning or
line search to obtain reliable descent.

The gap addressed by \ITSPACE{} is a low-rank update problem: we seek a
closed-form update for the factor $Y$ itself, so that every iterate
$X_k=Y_kY_k^\top$ is positive semidefinite, rank constrained, and guaranteed to
decrease the exact BW objective.  The next section derives this update by
rewriting the BW objective in the factor variable and minimizing a quadratic
upper bound at each iteration.  A detailed comparison of baseline families,
objectives, guarantees, and per-step costs is provided in
Appendix~\ref{sec:why-bw}.

\section{The \ITSPACE{} Algorithm}
\label{sec:algorithm}

\subsection{Problem Setup and Notation}
\label{sec:problem-overview}

Let $d\in\mathbb{N}$, and let $\mathbb{S}_{++}^d$ denote the cone of
$d\times d$ symmetric positive definite matrices.  We are given a fixed target
covariance $\Sigma_\star\in\mathbb{S}_{++}^d$ and an initial covariance
$X_0\succeq0$.  The goal is to produce a short sequence of covariance iterates
\[
    X_0, X_1,\ldots,X_K
\]
that moves the current covariance toward $\Sigma_\star$ under the exact
Gaussian Wasserstein objective
\begin{equation}
\label{eq:obj-X}
    W_2^2(X_k,\Sigma_\star)
    \qquad
    \text{as defined in Eq.~\eqref{eq:bures-def}.}
\end{equation}
The setting of interest is constrained by both a rank budget $r\ll d$ and a
small iteration budget $K$.  Thus the algorithm should maintain valid covariance
iterates throughout the trajectory, rather than only after a final projection.

\paragraph{Rank Budget and Factorization.}
We enforce positive semidefiniteness and the rank constraint by representing the
maintained state through a square-root factor:
\begin{equation}
\label{eq:factor}
    X_k = Y_kY_k^\top,
    \qquad
    Y_k\in\mathbb{R}^{d\times r}.
\end{equation}
This guarantees $X_k\succeq0$ and $\operatorname{rank}(X_k)\le r$ for every
$k$.  The current state of the algorithm is the factor $Y_k$; the corresponding
covariance is $X_k=Y_kY_k^\top$.  When the available initial covariance is
full-rank, we initialize the maintained rank-budgeted state by taking a rank-$r$
PSD truncation and a square-root factor of it.  To keep notation simple, we
write this maintained initial covariance as $X_0=Y_0Y_0^\top$.

The target covariance $\Sigma_\star$ may be full rank even when the maintained
state is low rank.  Our main theory assumes $\Sigma_\star\in\mathbb{S}_{++}^d$
so that its principal square root is well defined.  In experiments with
near-singular empirical covariances, we apply the same small diagonal
stabilization across methods before computing matrix square roots
(Appendix~\ref{app:exp-data}).

The factorization is not unique: $Y$ and $YR$ represent the same covariance for
any orthogonal matrix $R\in O(r)$.  The update below chooses a convenient
orientation of the factor through a polar/Procrustes certificate and then takes
a closed-form proximal majorization--minimization step.

\subsection{Deriving the Closed-Form Factor Update}
\label{sec:derivation-linear}

We now derive \ITSPACE{} as a proximal majorization--minimization (MM) method
for minimizing the exact BW objective restricted to the rank-budgeted
factorization $X=YY^\top$.  Here $Y\in\mathbb{R}^{d\times r}$ denotes a generic
factor variable and $Y_k$ denotes the current factor iterate.

\paragraph{Factor Form of the BW Objective.}
Let
\[
    S := \Sigma_\star^{1/2}
\]
be the principal square root of the target covariance.  Substituting
$X=YY^\top$ into the BW closed form and using
$\operatorname{tr}((AA^\top)^{1/2})=\|A\|_\ast$ gives the equivalent factor
objective
\begin{equation}
\label{eq:dc-form}
F(Y)
\;:=\;
W_2^2(YY^\top,\Sigma_\star)
\;=\;
\|Y\|_F^2 + \operatorname{tr}(\Sigma_\star) - 2\|SY\|_\ast .
\end{equation}
This identity is the main reduction.  The rank constraint is built into the
factor $Y$, and the only nonquadratic term in $F$ is the nuclear norm
$\|SY\|_\ast$.  Since this nuclear norm appears with a negative sign, a linear
lower bound on $\|SY\|_\ast$ becomes a quadratic upper bound on the objective.
This is the basis of the MM update.

\paragraph{Nuclear-Norm Certificate.}
The nuclear norm admits the support-function representation
\begin{equation}
\label{eq:support}
\|A\|_\ast
\;=\;
\max_{\|Q\|_2\le 1}\operatorname{tr}(Q^\top A),
\qquad
A\in\mathbb{R}^{d\times r}.
\end{equation}
For $A\in\mathbb{R}^{d\times r}$, we write $\mathrm{polar}(A)$ for a selected
maximizer in~\eqref{eq:support}.  If
$A=U\Gamma V^\top$ is a compact singular value decomposition, then one valid
choice is
\[
    \mathrm{polar}(A)=UV^\top.
\]
When $A$ has full column rank, this coincides with the usual thin polar factor
$A(A^\top A)^{-1/2}$.  When $A$ is rank deficient, the maximizer may not be
unique and need not satisfy $Q^\top Q=I_r$; for example, when $A=0$, the choice
$Q=0$ is feasible and attains the maximum.

At the current iterate, define
\begin{equation}
\label{eq:polar-def}
    Q_k := \mathrm{polar}(SY_k),
    \qquad
    B_k := S Q_k .
\end{equation}
Then $\|Q_k\|_2\le1$, and by~\eqref{eq:support}, for every factor $Y$,
\begin{equation}
\label{eq:support-ineq}
    \|SY\|_\ast
    \;\ge\;
    \operatorname{tr}(Q_k^\top S Y).
\end{equation}
If $Q_k$ is an exact maximizer for $SY_k$, then the inequality is tight at
$Y=Y_k$.  Intuitively, $Q_k$ selects an orientation that best aligns the current
factor with the target square root, and $B_k=SQ_k$ is the factor used by the
quadratic update below.

\paragraph{Quadratic Upper Bound.}
Because the nuclear norm appears with a negative sign in~\eqref{eq:dc-form}, the
lower bound~\eqref{eq:support-ineq} gives an upper bound on $F$.  For every
$Y$,
\begin{align}
F(Y)
&=
\|Y\|_F^2 + \operatorname{tr}(\Sigma_\star) - 2\|SY\|_\ast
\nonumber\\
&\le
\|Y\|_F^2 + \operatorname{tr}(\Sigma_\star)
-
2\operatorname{tr}(Q_k^\top S Y)
\nonumber\\
&=
\|Y-B_k\|_F^2 + C_k,
\label{eq:quad-majorizer}
\end{align}
where
\[
    C_k := \operatorname{tr}(\Sigma_\star)-\|B_k\|_F^2
\]
is independent of $Y$.  Under an exact polar certificate, this quadratic
majorizer is tight at the current iterate $Y_k$.

\paragraph{Proximal Stabilization and Closed-Form Update.}
We add a proximal term around $Y_k$ and minimize the resulting quadratic
surrogate:
\begin{equation}
\label{eq:surrogate}
U_k(Y)
\;:=\;
\|Y-B_k\|_F^2
+
\frac{1}{2\lambda}\|Y-Y_k\|_F^2,
\qquad
\lambda>0.
\end{equation}
This surrogate has a closed-form minimizer.  Writing
\begin{equation}
\label{eq:alpha-def}
    \alpha := \frac{2\lambda}{1+2\lambda}\in(0,1),
\end{equation}
the update is
\begin{equation}
\label{eq:update}
Y_{k+1}
=
\alpha B_k + (1-\alpha)Y_k
=
\alpha S\,\mathrm{polar}(SY_k) + (1-\alpha)Y_k .
\end{equation}
The covariance iterate is then $X_{k+1}=Y_{k+1}Y_{k+1}^\top$.  Thus each
iteration consists of computing the polar certificate $Q_k=\mathrm{polar}(SY_k)$
and taking a closed-form average between the current factor $Y_k$ and the
certificate-induced factor $B_k=SQ_k$.

\paragraph{Role of \(\lambda\) and \(\alpha\).}
The parameter $\alpha$ is not an independently chosen learning rate; it is the
averaging weight induced by the proximal penalty.  Large $\lambda$ gives a more
aggressive step with $\alpha$ close to $1$, while small $\lambda$ keeps the
update close to the current factor.  In the full-rank case $r=d$, if $Y_k$ is
full rank, the undamped limit $\alpha\to1$ maps $Y_k$ to a square root of
$\Sigma_\star$ up to an orthogonal rotation.  We use $\alpha<1$ when a
controlled multi-step trajectory is desired or when damping is helpful in finite
precision.

\subsection{Algorithm}
\label{sec:algorithm-box}

Algorithm~\ref{alg:ITSPACE} summarizes the iteration for a single target
covariance.  The target square root $S=\Sigma_\star^{1/2}$ is computed once and
then reused across iterations.  Implementation details for computing
$\mathrm{polar}(\cdot)$ efficiently and stably are provided in
Appendix~\ref{app:polar-impl}.

\vspace{0.25em}
\begin{algorithm}[H]
\caption{\ITSPACE{}: Proximal MM for BW Covariance Alignment}
\label{alg:ITSPACE}
\begin{algorithmic}
\Require Target $\Sigma_\star\in\mathbb{S}_{++}^d$, initial factor $Y_0\in\mathbb{R}^{d\times r}$, proximal weight $\lambda>0$, iterations $K$
\Ensure Factors $\{Y_k\}_{k=0}^K$ and, when needed, covariances $X_k=Y_kY_k^\top$
\State $S \gets \Sigma_\star^{1/2}$
\State $\alpha \gets 2\lambda/(1+2\lambda)$
\For{$k = 0,1,\dots,K-1$}
    \State $Q_k \gets \mathrm{polar}(S Y_k)$
    \State $Y_{k+1} \gets \alpha\, S Q_k + (1-\alpha)\,Y_k$
\EndFor
\end{algorithmic}
\end{algorithm}
\vspace{0.25em}

\subsection{Computational Cost}
\label{sec:complexity}

The computation separates into one-time preprocessing, repeated low-rank
updates, and optional BW evaluations for logging.  Computing
$S=\Sigma_\star^{1/2}$ once costs $O(d^3)$ time and $O(d^2)$ memory, for example
via eigendecomposition.  This cost is paid once per target covariance and the
same $S$ is reused across all \ITSPACE{} iterations.

Each iteration first forms $SY_k$, computes the polar factor of the resulting
$d\times r$ matrix, and then forms $SQ_k$.  The two multiplications involving
$S$ cost $O(d^2r)$ time.  The polar factor of a $d\times r$ matrix can be
computed by a thin SVD in $O(dr^2)$ time, or by a Gram-based route in
$O(dr^2+r^3)$ time when $r\ll d$; see Appendix~\ref{app:polar-impl}.  Therefore,
the repeated per-iteration update cost is
\[
    O(d^2r + dr^2 + r^3).
\]
For $r\ll d$, the repeated cost is dominated by multiplications with the fixed
target square root $S$, rather than by a full $d\times d$ matrix square root at
each step.  In the full-rank case $r=d$, the update cost reduces to $O(d^3)$.

Evaluating $W_2^2(X_k,\Sigma_\star)$ exactly for plots or tables requires a
matrix square root and typically costs $O(d^3)$ per evaluation.  This evaluation
overhead is not part of the \ITSPACE{} update itself.  In our timing
protocol (Section~\ref{sec:exp-metrics}), the one-time computation of $S$ is
treated as preprocessing and excluded from per-iterate algorithmic time, so the
reported algorithmic time reflects only the repeated inner-loop updates.

\section{Theoretical Properties}
\label{sec:theory}

We state the main guarantees for the update in Eq.~\eqref{eq:update}.
Throughout this section, the target covariance satisfies
$\Sigma_\star\in\mathbb{S}_{++}^d$, and we write
$S=\Sigma_\star^{1/2}$.  For a factor $Y\in\mathbb{R}^{d\times r}$, define
\begin{equation}
\label{eq:FY_def_theory}
F(Y)
:=
W_2^2(YY^\top,\Sigma_\star)
=
\|Y\|_F^2+\tr(\Sigma_\star)-2\|SY\|_\ast .
\end{equation}
Let $\{Y_k\}_{k=0}^K$ be generated by \ITSPACE{}, and define
$X_k:=Y_kY_k^\top$.  We also use the notation
\[
    Q_k:=\mathrm{polar}(SY_k),
    \qquad
    B_k:=SQ_k,
    \qquad
    \alpha:=\frac{2\lambda}{1+2\lambda}.
\]
Here $\mathrm{polar}(\cdot)$ is understood in the support-function sense defined
in Section~\ref{sec:derivation-linear}.  All proofs are deferred to
Appendix~\ref{app:theory}.

\subsection{Validity and Sufficient Descent}
\label{sec:theory-mm}

The factorization immediately enforces the structural constraint required by
the algorithm.

\begin{proposition}[Validity of the Iterates]
\label{prop:validity}
For every iteration $k$, the covariance iterate
$X_k=Y_kY_k^\top$ is positive semidefinite and satisfies
$\operatorname{rank}(X_k)\le r$.  If $r=d$ and $Y_k$ is invertible, then
$X_k\in\mathbb{S}_{++}^d$.
\end{proposition}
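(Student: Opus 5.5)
The statement follows directly from the factorization in Eq.~\eqref{eq:factor}, so the plan is a short direct verification with no appeal to the descent machinery. I will first observe that every iterate produced by Algorithm~\ref{alg:ITSPACE} is a real $d\times r$ matrix. This holds by induction on $k$: $Y_0\in\mathbb{R}^{d\times r}$ by assumption. If $Y_k\in\mathbb{R}^{d\times r}$, then $SY_k\in\mathbb{R}^{d\times r}$, and any selected maximizer $Q_k=\mathrm{polar}(SY_k)$ of \eqref{eq:support} lies in $\mathbb{R}^{d\times r}$. Hence $Y_{k+1}=\alpha SQ_k+(1-\alpha)Y_k$ is again in $\mathbb{R}^{d\times r}$. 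This argument does not depend on whether the polar certificate is exact, unique, or rank deficient; it only uses that $Q_k$ has the right shape.

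Given $Y_k\in\mathbb{R}^{d\times r}$, positive semidefiniteness is the standard quadratic-form computation. $X_k=Y_kY_k^\top$ is symmetric, and for any $v\in\mathbb{R}^d$ we have $v^\top X_kv=\|Y_k^\top v\|_2^2\ge0$. For the rank bound, I will use $\operatorname{rank}(Y_kY_k^\top)=\operatorname{rank}(Y_k)\le\min(d,r)\le r$. One can equally note that the range of $X_k$ is contained in the column space of $Y_k$, which has dimension at most $r$.

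For the last claim, suppose $r=d$ and $Y_k$ is invertible. Then for $v\neq0$ we have $Y_k^\top v\neq0$, since $Y_k^\top$ is also invertible. Therefore $v^\top X_kv=\|Y_k^\top v\|_2^2>0$, so $X_k\in\mathbb{S}_{++}^d$.

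I do not expect any real obstacle. The only point needing care is to make explicit that the argument is independent of the choice of $\mathrm{polar}(\cdot)$ in the singular case (e.g.\ $Q_k=0$). I will also note that the proposition does not claim that invertibility of $Y_k$ is preserved across iterations; that would require a separate argument and is not part of the statement.
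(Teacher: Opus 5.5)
Your proof is correct and follows the paper's argument: $X_k=Y_kY_k^\top$ is symmetric and PSD, $\operatorname{rank}(Y_kY_k^\top)\le\operatorname{rank}(Y_k)\le r$ gives the rank bound, and an invertible $Y_k$ gives $X_k\in\mathbb{S}_{++}^d$ when $r=d$. Your induction showing that each $Y_k$ stays in $\mathbb{R}^{d\times r}$ for any choice of $\mathrm{polar}(\cdot)$ is a small step the paper leaves implicit.
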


The next theorem strengthens the monotonicity guarantee.  The update does not
only decrease the exact BW objective; it decreases it by at least a squared
step-size term in factor space.

\begin{theorem}[Sufficient BW Descent]
\label{thm:monotone}
Assume that $Q_k=\mathrm{polar}(SY_k)$ is an exact maximizer of the
nuclear-norm support problem for $SY_k$.  Let $Y_{k+1}$ be the
\ITSPACE{} update in Eq.~\eqref{eq:update}.  Then
\begin{equation}
\label{eq:sufficient-descent}
F(Y_k)-F(Y_{k+1})
\;\ge\;
\frac{1}{\alpha}\|Y_{k+1}-Y_k\|_F^2
\;=\;
\alpha\|B_k-Y_k\|_F^2 .
\end{equation}
Consequently,
\[
W_2^2(X_{k+1},\Sigma_\star)
\le
W_2^2(X_k,\Sigma_\star)
\qquad
\text{for all } k .
\]
\end{theorem}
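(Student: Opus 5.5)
The plan is to chain the majorizer \eqref{eq:quad-majorizer} with a strong-convexity bound for the quadratic $g_k(Y):=\|Y-B_k\|_F^2$. By \eqref{eq:quad-majorizer}, for every $Y$ we have $F(Y)\le g_k(Y)+C_k$. Because $Q_k$ is an exact maximizer, \eqref{eq:support-ineq} holds with equality at $Y=Y_k$, so also $F(Y_k)=g_k(Y_k)+C_k$. Subtracting the two relations at $Y=Y_{k+1}$ gives
\[
F(Y_k)-F(Y_{k+1})\;\ge\; g_k(Y_k)-g_k(Y_{k+1}),
\]
and it remains to compute the right-hand side.

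Next, substitute the update \eqref{eq:update}. Writing $D:=B_k-Y_k$, we have $Y_{k+1}-Y_k=\alpha D$ and $Y_{k+1}-B_k=-(1-\alpha)D$. Hence
\[
g_k(Y_k)-g_k(Y_{k+1})=\|D\|_F^2-(1-\alpha)^2\|D\|_F^2=\alpha(2-\alpha)\|D\|_F^2 .
\]
Since $\alpha\in(0,1)$, we have $2-\alpha\ge 1$, and therefore this quantity is at least $\alpha\|D\|_F^2=\alpha\|B_k-Y_k\|_F^2$. The stated equality $\frac{1}{\alpha}\|Y_{k+1}-Y_k\|_F^2=\frac{1}{\alpha}\alpha^2\|D\|_F^2=\alpha\|D\|_F^2$ is immediate.

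Monotonicity then follows because the right-hand side of \eqref{eq:sufficient-descent} is nonnegative, and $F(Y_k)=W_2^2(X_k,\Sigma_\star)$ by \eqref{eq:FY_def_theory}.

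The only delicate point is tightness at $Y_k$ in the rank-deficient case. There $\mathrm{polar}(SY_k)$ may be non-unique and need not be an isometry. The argument only uses that $Q_k$ attains the maximum in \eqref{eq:support}, which is exactly the hypothesis, together with feasibility $\|Q_k\|_2\le 1$, which gives the global lower bound \eqref{eq:support-ineq}. So no further case split should be needed. If anything, I would double-check the identity $\|Y\|_F^2-2\tr(Q_k^\top SY)=\|Y-B_k\|_F^2-\|B_k\|_F^2$, using $B_k=SQ_k$ and the symmetry of $S$, since this is the identity that underlies \eqref{eq:quad-majorizer}.
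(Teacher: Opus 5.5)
Your proof is correct, and it departs from the paper's after the tight-majorizer step. The paper adds the proximal term to form $U_k(Y)=\|Y-B_k\|_F^2+C_k+\frac{1}{2\lambda}\|Y-Y_k\|_F^2$. Since $Y_{k+1}$ minimizes $U_k$ and the quadratic coefficient of $U_k$ is $1+\frac{1}{2\lambda}=\frac{1}{\alpha}$, it has the completion-of-squares identity $U_k(Y)=U_k(Y_{k+1})+\frac{1}{\alpha}\|Y-Y_{k+1}\|_F^2$. The paper then chains $F(Y_{k+1})\le U_k(Y_{k+1})$ with $U_k(Y_k)=F(Y_k)$, so the $\frac{1}{\alpha}$ in the theorem is exactly that quadratic coefficient.

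You never form $U_k$. Instead you evaluate the unregularized majorizer at $Y_k$ and $Y_{k+1}$, using that both lie on the segment toward $B_k$, and obtain the decrease $\alpha(2-\alpha)\|B_k-Y_k\|_F^2$. This is strictly sharper than the stated bound, because $\alpha(2-\alpha)\|B_k-Y_k\|_F^2=\alpha\|B_k-Y_k\|_F^2+\frac{1}{2\lambda}\|Y_{k+1}-Y_k\|_F^2$. The extra term is precisely the proximal penalty the paper discards in the step $F(Y_{k+1})\le U_k(Y_{k+1})$.

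Each route has its own advantage. The paper's template only needs $Y_{k+1}$ to minimize the proximal surrogate; it uses the explicit averaging form only at the end, to rewrite the bound as $\alpha\|B_k-Y_k\|_F^2$. It is also reused almost verbatim for Proposition~\ref{prop:inexact-polar}. Your direct computation depends on the closed-form update but is shorter and gives the better constant. It also adapts just as easily to an inexact certificate, where tightness at $Y_k$ becomes an additive slack of $2\varepsilon_k$.
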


This result is the main descent certificate for \ITSPACE{}.  It applies for any
factor rank $r\le d$ under the positive definite target assumption above.  A
direct finite-budget consequence is
\begin{equation}
\label{eq:finite-length-bound}
\sum_{k=0}^{K-1}\|Y_{k+1}-Y_k\|_F^2
\;\le\;
\alpha\bigl(F(Y_0)-F(Y_K)\bigr)
\;\le\;
\alpha F(Y_0).
\end{equation}
Thus the same certificate that proves monotonicity also controls the cumulative
movement of the factor iterates.  This is an objective-level guarantee for the
exact BW energy; downstream task metrics need not improve monotonically because
they also depend on the feature representation, the fixed predictor, and the
shared downstream adaptation map.  The proof follows from the tight quadratic
upper bound induced by the polar certificate and a completion-of-squares
identity for the proximal surrogate.

\subsection{Inexact Polar Certificates}
\label{sec:theory-inexact}
\label{sec:theory-fp}

Theorem~\ref{thm:monotone} assumes that the polar certificate is computed
exactly.  In finite precision, it is useful to state the effect of an inexact
but feasible certificate.

\begin{proposition}[Inexact Polar Certificates]
\label{prop:inexact-polar}
Let $A_k:=SY_k$, and let $\widehat Q_k$ satisfy
$\|\widehat Q_k\|_2\le 1$ and
\begin{equation}
\label{eq:eps-cert}
\tr(\widehat Q_k^\top A_k)
\;\ge\;
\|A_k\|_\ast-\varepsilon_k
\qquad
\text{for some } \varepsilon_k\ge0 .
\end{equation}
Form the surrogate by replacing $Q_k$ with $\widehat Q_k$, and let
$Y_{k+1}$ be the corresponding minimizer.  Then
\begin{equation}
\label{eq:inexact-main}
F(Y_{k+1})
\;\le\;
F(Y_k)+2\varepsilon_k .
\end{equation}
\end{proposition}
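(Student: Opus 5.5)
We follow the same majorization route as in Theorem~\ref{thm:monotone}, now with the inexact certificate $\widehat Q_k$ and $\widehat B_k:=S\widehat Q_k$. Since $\|\widehat Q_k\|_2\le 1$, the support-function representation~\eqref{eq:support} gives $\|SY\|_\ast\ge\tr(\widehat Q_k^\top SY)$ for every $Y$. Completing the square as in~\eqref{eq:quad-majorizer} then yields the global majorizer
\[
F(Y)\le \widehat M_k(Y):=\|Y-\widehat B_k\|_F^2+\tr(\Sigma_\star)-\|\widehat B_k\|_F^2 .
\]
Because the certificate is only $\varepsilon_k$-optimal, $\widehat M_k$ is no longer tight at $Y_k$. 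Instead, condition~\eqref{eq:eps-cert} gives
\[
\widehat M_k(Y_k)=\|Y_k\|_F^2+\tr(\Sigma_\star)-2\tr(\widehat Q_k^\top SY_k)\le F(Y_k)+2\varepsilon_k .
\]
This is the only place where the factor $2\varepsilon_k$ enters.

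Next, $Y_{k+1}$ minimizes the proximal surrogate $\widehat U_k(Y)=\|Y-\widehat B_k\|_F^2+\frac{1}{2\lambda}\|Y-Y_k\|_F^2$, so its value there is at most its value at $Y_k$:
\[
\|Y_{k+1}-\widehat B_k\|_F^2+\tfrac{1}{2\lambda}\|Y_{k+1}-Y_k\|_F^2\le\|Y_k-\widehat B_k\|_F^2 .
\]
Dropping the nonnegative proximal term gives $\widehat M_k(Y_{k+1})\le\widehat M_k(Y_k)$. Chaining the three inequalities,
\[
F(Y_{k+1})\le\widehat M_k(Y_{k+1})\le\widehat M_k(Y_k)\le F(Y_k)+2\varepsilon_k,
\]
which is~\eqref{eq:inexact-main}. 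If the retained proximal term is combined with the completion-of-squares identity from the proof of Theorem~\ref{thm:monotone}, the bound sharpens to $F(Y_{k+1})\le F(Y_k)+2\varepsilon_k-\frac1\alpha\|Y_{k+1}-Y_k\|_F^2$, but this is not needed here.

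There is no real obstacle in this argument. The one point needing care is that the majorizer must be valid globally for every $Y$, not only near $Y_k$. This rests entirely on the feasibility condition $\|\widehat Q_k\|_2\le1$; without it,~\eqref{eq:support} would not give a lower bound on $\|SY\|_\ast$. The inexactness affects only the tightness at $Y_k$, which is exactly where the $2\varepsilon_k$ slack appears.
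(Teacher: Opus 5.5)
Your proof is correct and follows essentially the same majorization--minimization route as the paper: feasibility of $\widehat Q_k$ gives a global majorizer, the certificate gap costs exactly $2\varepsilon_k$ at $Y_k$, and minimality of $Y_{k+1}$ for the proximal surrogate closes the chain. The only cosmetic difference is that you compare surrogate values at $Y_{k+1}$ and $Y_k$ directly and drop the proximal term. The paper instead invokes the completion-of-squares identity to get the sharper bound with $-\frac{1}{\alpha}\|Y_{k+1}-Y_k\|_F^2$, and only then discards that term.
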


Thus finite-precision error enters the descent statement only through the
certificate gap $\varepsilon_k$.  When $\varepsilon_k=0$, the monotonicity part
of Theorem~\ref{thm:monotone} is recovered.  A sharper descent-margin version of
Proposition~\ref{prop:inexact-polar}, which recovers the full sufficient
decrease bound in the exact case, is given in Appendix~\ref{app:inexactpolar}.

\subsection{Full-Rank Invertible Fixed Points}
\label{sec:theory-fixedpoints}

The factorization $X=YY^\top$ is invariant under right multiplication by an
orthogonal matrix.  The following proposition characterizes fixed points in the
full-rank invertible case.

\begin{proposition}[Full-Rank Invertible Fixed Points]
\label{prop:fixedpoint}
Assume $r=d$ and $\Sigma_\star\in\mathbb{S}_{++}^d$.  Restrict attention to
invertible factor iterates, so that the polar factor of $SY$ is unique and
orthogonal.  Then the fixed points of the factor update are exactly
\[
    Y=\Sigma_\star^{1/2}R,
    \qquad
    R\in O(d),
\]
and the corresponding covariance fixed point is uniquely
\[
    X=YY^\top=\Sigma_\star .
\]
\end{proposition}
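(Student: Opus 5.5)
Since $\alpha\in(0,1)$, the fixed-point condition for the update in Eq.~\eqref{eq:update} reduces to a condition on the certificate alone: $Y=\alpha B+(1-\alpha)Y$ holds if and only if $Y=B=S\,\mathrm{polar}(SY)$. The whole proof therefore consists of characterizing the invertible solutions of $Y=S\,\mathrm{polar}(SY)$. In the invertible case $S$ is invertible because $\Sigma_\star\in\mathbb{S}_{++}^d$, so $A:=SY$ is invertible. Its polar factor is then unique and orthogonal: $\mathrm{polar}(A)=A(A^\top A)^{-1/2}\in O(d)$, the compact-SVD choice $UV^\top$ with square invertible $U,V$.

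\textbf{Sufficiency.} Take $Y=SR$ with $R\in O(d)$. Then $SY=\Sigma_\star R$, which is a symmetric positive definite matrix times an orthogonal one. By uniqueness of the polar decomposition, $\mathrm{polar}(\Sigma_\star R)=R$. Indeed, $(A^\top A)^{1/2}=(R^\top\Sigma_\star^2R)^{1/2}=R^\top\Sigma_\star R$, so $A(A^\top A)^{-1/2}=\Sigma_\star R\,R^\top\Sigma_\star^{-1}R=R$. Hence $B=SR=Y$, and $Y$ is a fixed point.

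\textbf{Necessity.} Suppose $Y=SQ$ with $Q=\mathrm{polar}(SY)\in O(d)$. Then $SY=\Sigma_\star Q$. Since $\Sigma_\star\succ0$ and $Q$ is orthogonal, this is a left polar decomposition of $SY$ with positive factor $\Sigma_\star$, so its orthogonal factor is indeed $Q$. This is consistent, and it shows that any fixed point has the form $Y=SR$ with $R:=Q\in O(d)$. The one step needing care is matching the two uniqueness statements: the update uses the right polar form $A=Q(A^\top A)^{1/2}$, whereas $\Sigma_\star Q$ is written as positive times orthogonal. I will handle this with the identity $P Q=Q(Q^\top P Q)$, where $Q^\top PQ\succ0$, which converts between the two forms. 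Given that, the fixed-point equation is simply the definition $Y=SQ$, and no further argument is needed.

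\textbf{Covariance.} Finally, for any such $Y$ we have $X=YY^\top=SRR^\top S=S^2=\Sigma_\star$, independent of $R$. So the covariance fixed point is unique, even though the factor fixed points form an $O(d)$-orbit.

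I expect no real obstacle. The only subtle point is the appeal to uniqueness and orthogonality of the polar factor for invertible matrices, which is exactly why the statement restricts to invertible iterates.
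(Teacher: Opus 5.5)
Your proposal is correct and follows essentially the same route as the paper: use $\alpha\in(0,1)$ to reduce the fixed-point equation to $Y=S\,\mathrm{polar}(SY)$; for sufficiency, verify $\mathrm{polar}(\Sigma_\star R)=R$ via $(R^\top\Sigma_\star^2R)^{-1/2}=R^\top\Sigma_\star^{-1}R$; for necessity, read off $Y=SR$ with $R=\mathrm{polar}(SY)\in O(d)$; then compute $YY^\top=\Sigma_\star$. The left/right polar-form consistency check you plan for the necessity step is harmless but unnecessary, because orthogonality of $\mathrm{polar}(SY)$ for invertible $SY$ already gives the conclusion directly.
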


The invertibility restriction is important.  When factors or targets are
singular, polar certificates can be nonunique, and a blanket fixed-point
statement can fail under an unlucky certificate choice.
Appendix~\ref{app:matched-rank-psd} gives a matched-rank PSD extension together
with a counterexample showing why the certificate choice matters in singular
cases.  Additional equivariance properties under orthogonal changes of basis and
joint rescaling are stated in Appendix~\ref{app:equivariance}, with the
necessary consistency condition for nonunique polar certificates.

\section{Experiments}
\label{sec:experiments}

We evaluate \ITSPACE{} as a few-step inner-loop optimizer for the Gaussian
Bures--Wasserstein (BW) objective $W_2^2(\cdot,\Sigma_\star)$
(Eq.~\eqref{eq:bures-def}).  We study two settings:
(i) \emph{rank-budgeted covariance alignment}, where the goal is to reduce the
exact BW objective under a fixed rank and step budget, and
(ii) \emph{CovDrift-MR downstream transfer}, where the aligned covariance is
used inside a fixed prediction pipeline under controlled covariance drift.
Unless stated otherwise, all alignment results are evaluated using the same
closed-form BW value.

\paragraph{Experiment Overview.}
\textbf{Experiment I (optimization).} Given $X_0$ and $\Sigma_\star$, we run
each method for $K{=}20$ steps under rank budget $r{=}16$ and report
time-to-gap at $\tau\in\{0.1,0.02\}$ using the algorithmic time axis
$t_{\mathrm{alg}}$ (update + rank projection; BW-evaluation excluded).
\textbf{Experiment II (CovDrift-MR downstream).} We freeze a linear classifier
trained on source features, inject a stationary rank-$16$ covariance drift into
target features, and adapt using only unlabeled target batches.  Under budgets
$K\in\{1,2,5,20\}$, each method outputs an aligned covariance and the
corresponding whitening--recoloring transform, applied to target-test features
before evaluation with the frozen head.

% ------------------------------------------------------------
\subsection{Metrics and Evaluation Protocol}
\label{sec:exp-metrics}

Given $X_0$ and $\Sigma_\star$, each method produces iterates
$\{X_k\}_{k=0}^{K}$, or factors $Y_k$ with $X_k=Y_kY_k^\top$.  We evaluate each
iterate with the closed-form BW value $W_2^2(X_k,\Sigma_\star)$.

\paragraph{Rank-Budgeted GAP and Time-to-GAP.}
With $\operatorname{rank}(X_k)\le r<d$, the BW objective generally cannot reach
zero.  We therefore report a normalized gap above a method-independent
rank-$r$ truncation reference.  Let $\Pi_r(\cdot)$ be rank-$r$ PSD truncation
and define
\begin{equation}
\label{eq:floor_r}
\mathrm{floor}_r(\Sigma_\star)
\;:=\;
\sum_{i=r+1}^d \lambda_i(\Sigma_\star),
\end{equation}
where $\lambda_i(\Sigma_\star)$ are eigenvalues in descending order.
Since $\Pi_r(\Sigma_\star)$ shares eigenvectors with $\Sigma_\star$,
$W_2^2(\Pi_r(\Sigma_\star),\Sigma_\star)=\mathrm{floor}_r(\Sigma_\star)$.
We normalize the remaining BW excess by
\begin{equation}
\label{eq:gap_r}
\mathrm{gap}_r(X_k)
\;:=\;
\frac{
W_2^2(\Pi_r(X_k),\Sigma_\star) - \mathrm{floor}_r(\Sigma_\star)
}{
W_2^2(\Pi_r(X_0),\Sigma_\star) - \mathrm{floor}_r(\Sigma_\star)
}.
\end{equation}
Thus $\mathrm{gap}_r(X_0)=1$, and smaller values indicate closer alignment
under the same rank budget.  For methods that operate directly in rank-$r$
factors, $\Pi_r(X_k)=X_k$ and the projection is only notational.  We report the
first iterate with $\mathrm{gap}_r(X_k)\le\tau$ for
$\tau\in\{0.1,0.02\}$; if a threshold is not reached within $K$ steps, we
report \textbf{NR}.

\paragraph{Timing and Fairness.}
We report cumulative \emph{algorithmic time}
$t_{\mathrm{alg}} := t_{\mathrm{update}} + t_{\mathrm{proj}}$, where
$t_{\mathrm{proj}}$ is the explicit cost of enforcing the rank budget via
truncation $\Pi_r(\cdot)$ for methods that produce full-rank iterates.  Any
method-internal computation required to produce an iterate, such as
backtracking or line-search checks, is included in $t_{\mathrm{update}}$.
Shared BW-evaluation time for logging is excluded, and one-time preprocessing,
such as forming $\Sigma_\star^{1/2}$, is excluded unless stated otherwise; both
are reported in Appendix~\ref{app:exp-timing}.  We report medians over seeds
for timing and mean$\pm$std for downstream metrics.

% ------------------------------------------------------------
\subsection{Datasets}
\label{sec:exp-datasets}

Each instance specifies a target covariance $\Sigma_\star$ and an initial
covariance $X_0$.  We use fixed representations and per-domain empirical
covariances, aligning source covariances toward a designated target domain.
The three alignment instances correspond to Camelyon17 hospital shift,
VisDA-2017 synthetic-to-real shift, and Terra/CCT-20 location shift; details of
feature extraction and covariance construction are in
Appendix~\ref{app:exp-data}.

\paragraph{Rank and Iteration Budgets.}
We use $r{=}16$ and $K{=}20$ throughout to reflect an inner-loop constraint in
which the evaluated covariance state must remain rank constrained.  Methods
that produce full-rank iterates are projected to rank $r$ via $\Pi_r(\cdot)$
after each update, so all methods are compared under the same rank budget.

% ------------------------------------------------------------
\subsection{Experiment I: Covariance Alignment}
\label{sec:exp-results}

Unless otherwise stated, we use $r{=}16$, $K{=}20$, and three seeds
$\{0,1,2\}$, reporting medians.  Figure~\ref{fig:gapcurves_main} plots
$\mathrm{gap}_r$ versus $t_{\mathrm{alg}}$ across all three covariance-shift
instances.  Across datasets, \ITSPACE{} reaches the low-gap regime
($\mathrm{gap}_r\le 0.02$) earlier in $t_{\mathrm{alg}}$ than BW-GD and methods
based on other covariance geometries.

\begin{figure*}[t]
    \centering
    \begin{minipage}{0.32\textwidth}
        \centering
        \includegraphics[width=\linewidth]{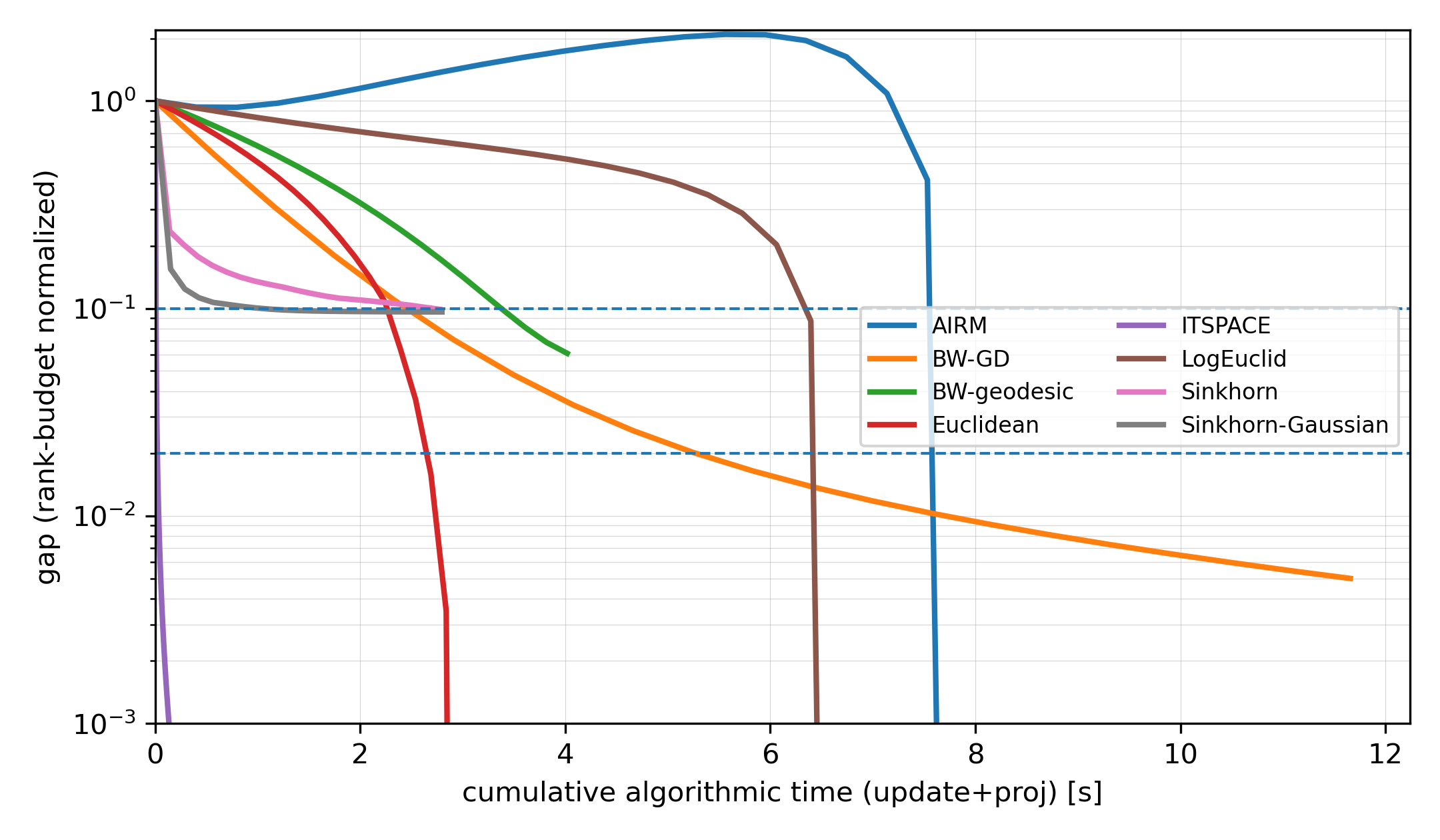}
        \vspace{-0.5em}
        {\scriptsize (a) Camelyon17}
    \end{minipage}
    \hfill
    \begin{minipage}{0.32\textwidth}
        \centering
        \includegraphics[width=\linewidth]{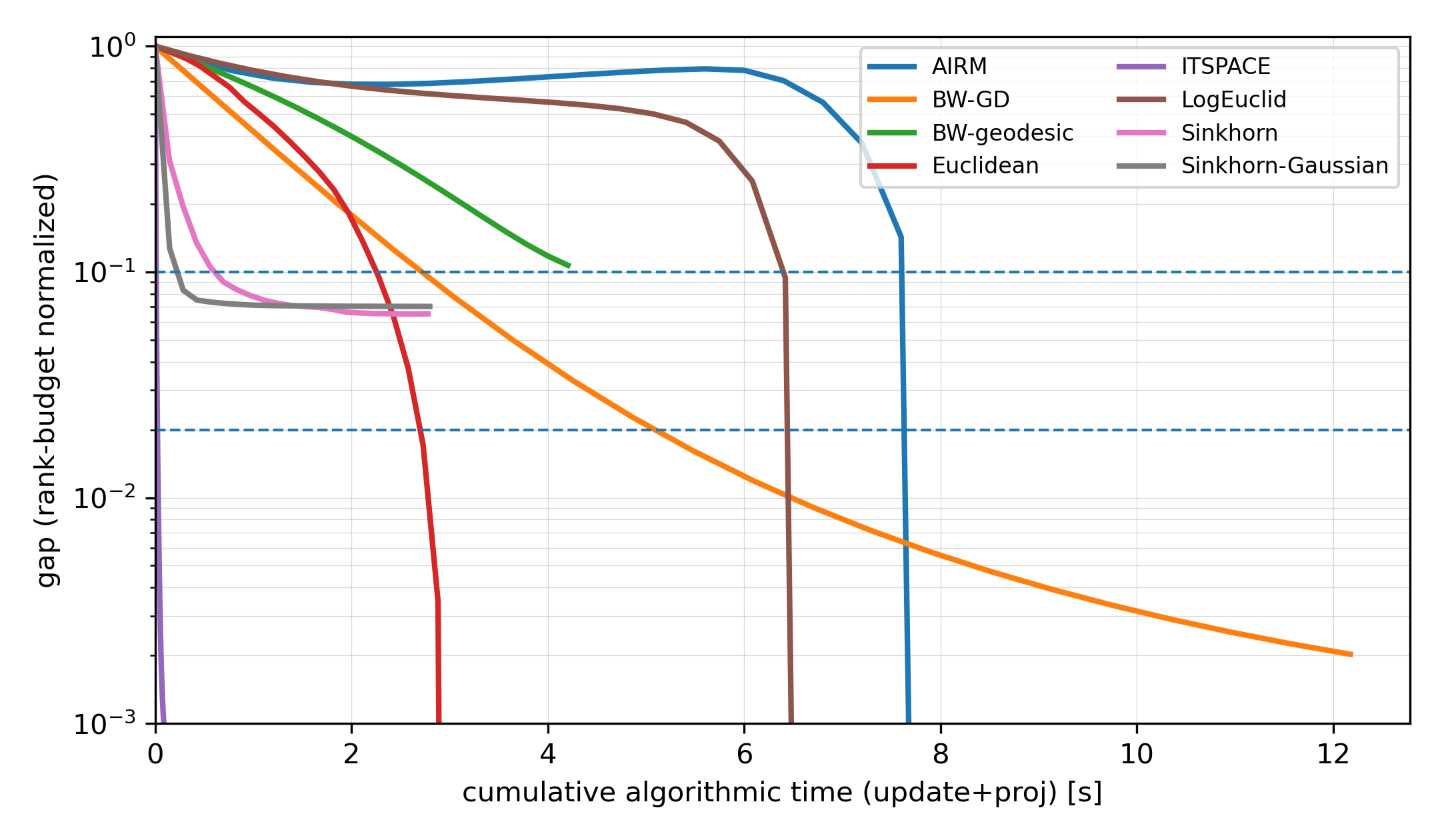}
        \vspace{-0.5em}
        {\scriptsize (b) VisDA-2017}
    \end{minipage}
    \hfill
    \begin{minipage}{0.32\textwidth}
        \centering
        \includegraphics[width=\linewidth]{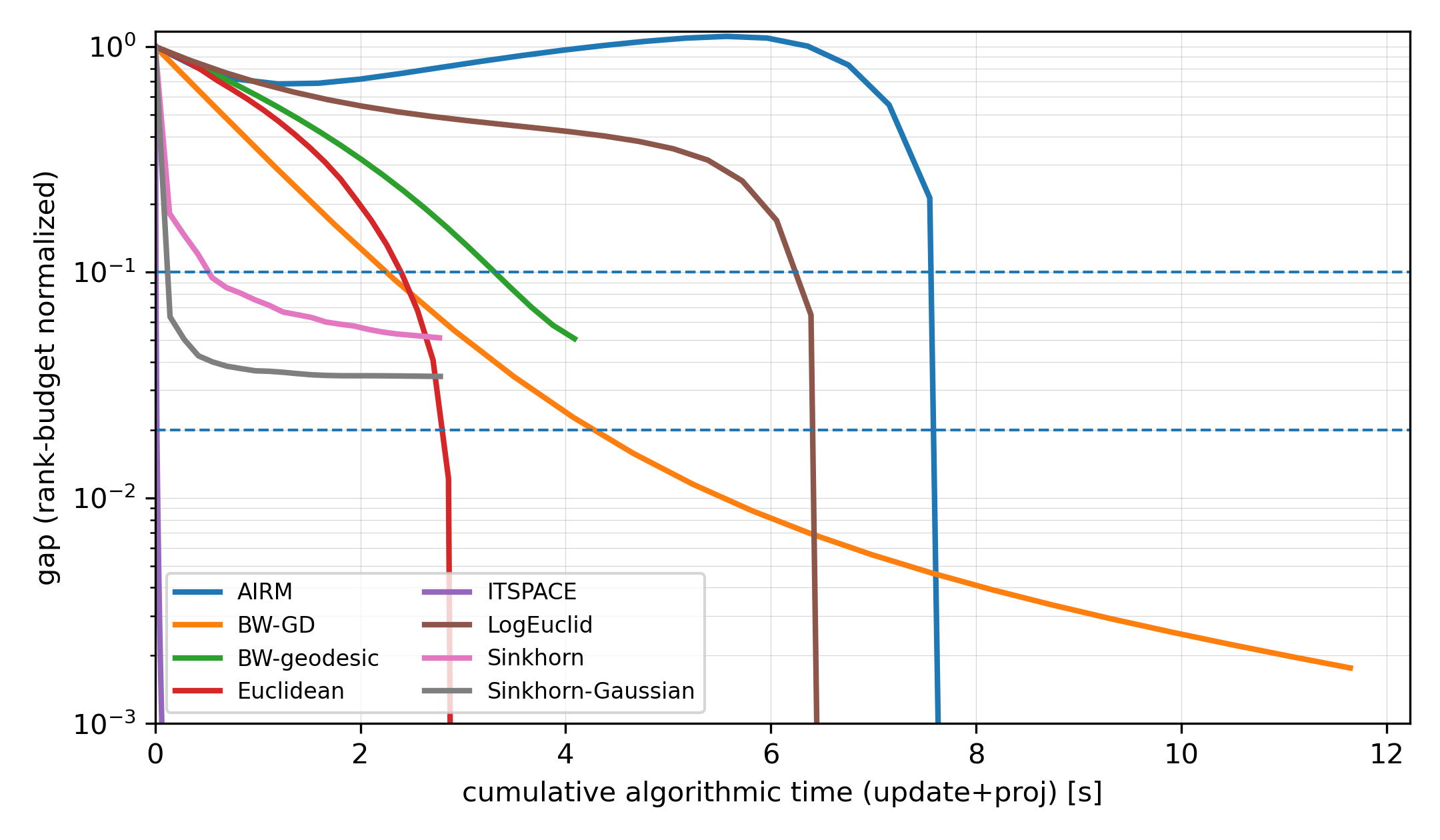}
        \vspace{-0.5em}
        {\scriptsize (c) Terra/CCT-20}
    \end{minipage}
    \caption{\textbf{Rank-budget GAP contraction under exact BW evaluation.}
    All datasets use $d{=}2048$, $r{=}16$, $K{=}20$, and three seeds.
    Curves plot $\mathrm{gap}_r$ versus
    $t_{\mathrm{alg}}{=}t_{\mathrm{update}}{+}t_{\mathrm{proj}}$;
    dashed lines mark $\tau\in\{0.1,0.02\}$.  \ITSPACE{} reaches the low-gap
    regime fastest under the shared rank-budget timing protocol.}
    \label{fig:gapcurves_main}
    \vspace{-0.6em}
\end{figure*}

Table~\ref{tab:gap_table_all} reports time-to-gap across datasets; \textbf{NR}
indicates that a threshold is not reached within $K$ steps, with parentheses
showing $\mathrm{gap}_r(X_K)$.  Projection costs are included for full-rank
baselines because enforcing the rank budget is part of the evaluated state
constraint.  Sinkhorn variants often fail to reach the low-gap threshold under
BW evaluation within $K$ steps, which is expected because they optimize a
regularized sample-OT objective rather than the exact Gaussian BW objective.

\begin{table*}[t]
\caption{\textbf{Cross-dataset time-to-gap under a rank budget.}
Median $t_{\mathrm{alg}}$ (seconds) to reach
$\mathrm{gap}_r\le 0.1$ / $\mathrm{gap}_r\le 0.02$
(Section~\ref{sec:exp-metrics}).  \textbf{NR} indicates not reached within
$K{=}20$; parentheses report $\mathrm{gap}_r(X_K)$.  All settings use
$r{=}16$ and three seeds $\{0,1,2\}$.  Best (fastest) entries per dataset are
in bold.}
\label{tab:gap_table_all}
\centering
\scriptsize
\setlength{\tabcolsep}{3pt}
\renewcommand{\arraystretch}{1.10}
\begin{tabular}{lcccccccc}
\toprule
Dataset & \ITSPACE{} & BW-GD & Euclid.~(+proj) & Log--Euclid.~(+proj) & AIRM~(+proj) & BW geodesic & Sinkhorn & Sinkhorn--Gaussian \\
\midrule
Camelyon17 ($d{=}2048$) & \textbf{0.0214/0.0575} & 6.85/13.8 & 6.62/7.45 & 16.4/17.3 & 19/19 & 8.1/NR\,(0.0608) & 7.8/NR\,(0.0991) & 2.23/NR\,(0.0962) \\
VisDA-2017 ($d{=}2048$) & \textbf{0.0369/0.055} & 7.32/13.2 & 6.22/7.47 & 16.6/17.4 & 19/19 & NR\,(0.107) & 2.17/NR\,(0.0652) & 0.877/NR\,(0.0703) \\
Terra/CCT-20 ($d{=}2048$) & \textbf{0.0188/0.0369} & 5.76/11.5 & 6.82/8.07 & 16.9/17.8 & 19.2/19.2 & 8.28/NR\,(0.0508) & 1.73/NR\,(0.0512) & 0.419/NR\,(0.0345) \\
\bottomrule
\end{tabular}
\end{table*}

A speedup summary relative to BW-GD is reported in Appendix
Table~\ref{tab:gap_speedup_app}.

% ------------------------------------------------------------
\subsection{Experiment II: CovDrift-MR Downstream Transfer Using Aligned Covariances}
\label{sec:downstream}

We test whether faster few-step alignment improves downstream performance under
a controlled covariance drift.  For each dataset and seed, we train a fixed
linear head on labeled source features, inject a rank-$16$ covariance drift into
target features, and adapt using only unlabeled target batches.  Each method
produces an aligned covariance and the associated whitening--recoloring
transform applied at test time.  For Terra, we evaluate using a fixed closed-set
protocol, using only target-test labels present in source training; labels are
used only for evaluation and the filtering is identical across methods.  All
methods share the same feature standardization, moment estimates, stabilization,
rank enforcement, and transport-map implementation; the only difference is the
covariance-alignment update rule (Appendix~\ref{app:downstream_covdrift_mr}).

Table~\ref{tab:covdrift_mr_main} gives the full strict-budget sweep on
Camelyon17.  \ITSPACE{} recovers a large fraction of the drift-induced drop
within one to two updates and reaches the endpoint-reference performance band
under small budgets.  Table~\ref{tab:covdrift_summary_main} summarizes the
same protocol across Camelyon17, VisDA-2017, and Terra at representative
budgets. Across datasets, \ITSPACE{} reaches endpoint-level downstream performance within
one or two updates and keeps adaptation time small; sample-OT baselines can be
competitive in task metric but require larger adaptation time.

\begin{table}[t]
\centering
\scriptsize
\setlength{\tabcolsep}{2.0pt}
\renewcommand{\arraystretch}{1.05}
\caption{\textbf{CovDrift-MR downstream on Camelyon17 under strict step budgets.}
Mean$\pm$std over three seeds.  $t_5$: adaptation time at $K{=}5$ seconds
(shared preprocessing excluded).  Settings: $r{=}16$, drift rank $16$,
$s_{\max}{=}1.70$; shared stabilization is as in
Appendix~\ref{app:downstream_covdrift_mr}.}
\label{tab:covdrift_mr_main}
\begin{tabular}{lccccc}
\toprule
Method & $K{=}1$ & $K{=}2$ & $K{=}5$ & $K{=}20$ & $t_5$ \\
\midrule
No adapt
& \makecell{$70.22$\\$\pm 2.17$}
& \makecell{$70.22$\\$\pm 2.17$}
& \makecell{$70.22$\\$\pm 2.17$}
& \makecell{$70.22$\\$\pm 2.17$}
& \textemdash \\
\midrule
ITSPACE
& \makecell{$73.94$\\$\pm 1.79$}
& \makecell{$74.01$\\$\pm 1.78$}
& \makecell{$74.01$\\$\pm 1.78$}
& \makecell{$74.01$\\$\pm 1.78$}
& \makecell{$0.035$\\$\pm 0.014$} \\
BW-geodesic
& \makecell{$71.92$\\$\pm 2.09$}
& \makecell{$72.86$\\$\pm 1.97$}
& \makecell{$73.54$\\$\pm 1.92$}
& \makecell{$74.01$\\$\pm 1.78$}
& \makecell{$0.015$\\$\pm 0.001$} \\
BW-GD
& \makecell{$73.97$\\$\pm 1.78$}
& \makecell{$73.93$\\$\pm 1.80$}
& \makecell{$73.97$\\$\pm 1.77$}
& \makecell{$74.01$\\$\pm 1.78$}
& \makecell{$0.037$\\$\pm 0.004$} \\
Euclidean
& \makecell{$72.11$\\$\pm 2.10$}
& \makecell{$72.11$\\$\pm 2.10$}
& \makecell{$72.11$\\$\pm 2.10$}
& \makecell{$74.01$\\$\pm 1.78$}
& \makecell{$0.021$\\$\pm 0.006$} \\
Log-Euclidean
& \makecell{$70.56$\\$\pm 1.89$}
& \makecell{$70.91$\\$\pm 1.95$}
& \makecell{$71.92$\\$\pm 2.09$}
& \makecell{$74.01$\\$\pm 1.78$}
& \makecell{$0.013$\\$\pm 0.002$} \\
AIRM
& \makecell{$70.56$\\$\pm 1.89$}
& \makecell{$70.91$\\$\pm 1.95$}
& \makecell{$71.92$\\$\pm 2.09$}
& \makecell{$74.01$\\$\pm 1.78$}
& \makecell{$0.013$\\$\pm 0.002$} \\
Sinkhorn
& \makecell{$72.67$\\$\pm 2.45$}
& \makecell{$72.67$\\$\pm 2.45$}
& \makecell{$72.67$\\$\pm 2.45$}
& \makecell{$72.67$\\$\pm 2.45$}
& \makecell{$0.233$\\$\pm 0.014$} \\
Sinkhorn-Gaussian
& \makecell{$72.88$\\$\pm 2.92$}
& \makecell{$72.88$\\$\pm 2.92$}
& \makecell{$72.88$\\$\pm 2.92$}
& \makecell{$72.88$\\$\pm 2.92$}
& \makecell{$0.164$\\$\pm 0.006$} \\
\bottomrule
\end{tabular}
\vspace{-0.4em}
\end{table}

\begin{table*}[t]
\centering
\scriptsize
\setlength{\tabcolsep}{4pt}
\renewcommand{\arraystretch}{1.08}
\caption{\textbf{CovDrift-MR downstream summary across datasets.}
Numbers are mean$\pm$std over three seeds.  We report representative strict
budgets for \ITSPACE{} and the strongest non-\ITSPACE{} method at $K{=}5$ under
the same protocol.  Full budget sweeps for VisDA-2017 and Terra are in
Appendix Table~\ref{tab:covdrift_visda_terra}.}
\label{tab:covdrift_summary_main}
\begin{tabular}{lccccc}
\toprule
Dataset / metric
& No adapt
& \ITSPACE{} $K{=}2$
& \ITSPACE{} $K{=}5$
& Strongest other at $K{=}5$
& \ITSPACE{} $t_5$ \\
\midrule
Camelyon17 AUROC
& $70.22\pm2.17$
& $74.01\pm1.78$
& $74.01\pm1.78$
& BW-GD: $73.97\pm1.77$
& $0.035\pm0.014$ \\
VisDA-2017 Acc.
& $75.68\pm2.18$
& $84.26\pm1.18$
& $84.26\pm1.18$
& BW-GD: $83.82\pm1.28$
& $0.029\pm0.006$ \\
Terra Acc.
& $58.89\pm2.53$
& $63.94\pm2.43$
& $63.94\pm2.43$
& Sinkhorn: $65.00\pm1.59$
& $0.024\pm0.002$ \\
\bottomrule
\end{tabular}
\vspace{-0.4em}
\end{table*}

\paragraph{Summary.}
\begin{itemize}
    \item \textbf{Experiment I:} Under a fixed rank budget, \ITSPACE{} reaches
    both GAP thresholds fastest in $t_{\mathrm{alg}}$ across all evaluated
    covariance-shift datasets.
    \item \textbf{Experiment II:} Under strict budgets ($K\le5$), \ITSPACE{}
    gives fast covariance adaptation and competitive downstream recovery; the
    strongest downstream method can depend on the dataset.
    \item \textbf{Stability:} The sufficient-decrease theorem and the
    inexact-certificate bound align the update with the exact BW evaluator used
    in Experiment I.
\end{itemize}

% \FloatBarrier
\section{Conclusion}
\label{sec:conclusion}

We presented \ITSPACE{}, a few-step method for rank-constrained covariance
alignment under the exact Gaussian optimal-transport objective, equivalently the
Bures--Wasserstein (BW) discrepancy between centered Gaussian covariances.
The main idea is simple: instead of optimizing over dense covariance matrices,
\ITSPACE{} updates a square-root factor $Y$ directly, so every iterate
$X_k=Y_kY_k^\top$ remains positive semidefinite and rank constrained by
construction.  In this factorization, the BW objective becomes a quadratic term
minus a nuclear norm; a polar/Procrustes certificate gives a tight linear
minorization of the nuclear norm, leading to a closed-form proximal
majorization--minimization update.  Thus BW covariance alignment can be used as
a lightweight inner-loop primitive when only a small number of valid covariance
updates are affordable.

On the theory side, the update is not merely heuristic.  We proved a
sufficient-decrease guarantee for the true BW objective under exact polar
computation, quantified the effect of finite-precision polar computation through
an explicit certificate-gap bound, and characterized the full-rank invertible
fixed points.  The appendix also records a matched-rank PSD extension and shows
why singular targets require care in the choice of polar certificate.  These
results give a compact optimization picture: \ITSPACE{} preserves the desired
matrix structure at every step while making certified progress on the exact
Gaussian Wasserstein objective.

Empirically, under a unified exact-BW evaluator and a common rank-budget timing
protocol, \ITSPACE{} reaches low BW gaps substantially faster than BW-gradient
descent and methods based on other covariance geometries.  In CovDrift-MR, these
fast covariance updates provide competitive downstream recovery under strict
update budgets, with the strongest downstream method depending on the dataset.
The scope of the method is deliberately covariance-level: \ITSPACE{} is designed
for Gaussian representations, second-order feature summaries, and modules that
consume covariance factors, rather than as a replacement for full sample-level
OT on arbitrary non-Gaussian distributions.  Within that scope, it provides a
closed-form, rank-compatible, BW-descending update that can be plugged into
domain adaptation, normalization, whitening--recoloring, and other
covariance-based learning pipelines.  Extending this few-step BW alignment
primitive to nonlinear shifts, barycenters, and other structured PSD summaries
is a natural direction for future work.

\section*{Acknowledgments}
We thank the anonymous reviewers for constructive feedback that helped improve the paper. This work was supported by grant NIH/NHLBI 5R01HL167072.

\section*{Impact Statement}
Optimal transport is a core tool for comparing and aligning distributions, with applications spanning domain adaptation, representation learning, generative modeling, and probabilistic inference. In many modern pipelines, distributions are summarized through compact second-order descriptors such as feature covariances; in this regime, Gaussian optimal transport yields an exact, closed-form discrepancy on covariances. This work contributes an efficient iterative primitive for covariance alignment under this exact objective, designed for settings where only a few adaptation steps are affordable and where covariances must remain well-posed throughout the update trajectory (e.g., fast test-time adaptation from unlabeled target batches, or inner-loop modules used inside larger training and deployment systems). By reducing the number of updates required to reach a given alignment quality and by relying on lightweight linear-algebra operations, such methods can lower computational cost, latency, and energy use. Moreover, directly optimizing the exact Bures--Wasserstein loss can yield more faithful alignment toward a target covariance than surrogate geometries or regularized OT solvers, which may translate into improved reliability for downstream components that consume covariance estimates (e.g., whitening/normalization layers or Gaussian embeddings). \ITSPACE{} is a general optimization method and does not encode application-specific intent; like other broadly applicable ML tools, it may be misused by bad actors, and the community is encouraged to deploy distribution-alignment techniques in ways that benefit society.

\bibliography{icml2026}
\bibliographystyle{icml2026}

%%%%%%%%%%%%%%%%%%%%%%%%%%%%%%%%%%%%%%%%%%%%%%%%%%%%%%%%%%%%%%%%%%%%%%%%%%%%%%%
%%%%%%%%%%%%%%%%%%%%%%%%%%%%%%%%%%%%%%%%%%%%%%%%%%%%%%%%%%%%%%%%%%%%%%%%%%%%%%%
% APPENDIX
%%%%%%%%%%%%%%%%%%%%%%%%%%%%%%%%%%%%%%%%%%%%%%%%%%%%%%%%%%%%%%%%%%%%%%%%%%%%%%%
%%%%%%%%%%%%%%%%%%%%%%%%%%%%%%%%%%%%%%%%%%%%%%%%%%%%%%%%%%%%%%%%%%%%%%%%%%%%%%%
\newpage

\appendix
\onecolumn

% ============================================================
\section{Additional Theory: Proofs and Derivations}
\label{app:theory}

% ------------------------------------------------------------
\subsection{Validity of the Factor Iterates}
\label{app:validity}

\begin{proof}[Proof of Proposition~\ref{prop:validity}]
For any factor $Y_k\in\mathbb{R}^{d\times r}$, the matrix
$X_k=Y_kY_k^\top$ is symmetric and positive semidefinite.  Moreover,
\[
    \operatorname{rank}(X_k)
    =
    \operatorname{rank}(Y_kY_k^\top)
    \le
    \operatorname{rank}(Y_k)
    \le r .
\]
If $r=d$ and $Y_k$ is invertible, then $X_k=Y_kY_k^\top$ is positive definite,
so $X_k\in\mathbb{S}_{++}^d$.
\end{proof}

% ------------------------------------------------------------
\subsection{Support Function, Polar Certificates, and the Majorizer}
\label{app:majorizer}

We first recall the support-function characterization of the nuclear norm.  For
any $A\in\mathbb{R}^{d\times r}$,
\begin{equation}
\label{eq:dual-nuc-op}
\|A\|_\ast
=
\max_{\|Q\|_2\le 1}
\operatorname{tr}(Q^\top A),
\end{equation}
where $\|\cdot\|_2$ denotes the operator norm.

\begin{lemma}[Support-Function Maximizer]
\label{lem:polar_support}
Let $A\in\mathbb{R}^{d\times r}$, and let
$A=U\Gamma V^\top$ be a compact singular value decomposition with rank $\rho$.
If $\rho>0$, then $Q^\star=UV^\top$ is feasible for
\eqref{eq:dual-nuc-op} and attains the maximum.  If $A=0$, then every feasible
$Q$ with $\|Q\|_2\le1$ is optimal; in particular, $Q=0$ is a valid choice.
\end{lemma}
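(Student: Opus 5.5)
The argument has three parts: an upper bound on $\operatorname{tr}(Q^\top A)$ valid for every feasible $Q$, a check that $UV^\top$ is feasible and attains that bound, and the trivial case $A=0$. We use only the compact SVD $A=U\Gamma V^\top$ with $U\in\mathbb{R}^{d\times\rho}$, $V\in\mathbb{R}^{r\times\rho}$ having orthonormal columns, $\Gamma=\mathrm{diag}(\gamma_1,\dots,\gamma_\rho)$ with $\gamma_i>0$, and $\|A\|_\ast=\sum_{i=1}^\rho\gamma_i$.

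\emph{Upper bound.} Take any $Q$ with $\|Q\|_2\le1$. By cyclicity of the trace,
\[
\operatorname{tr}(Q^\top A)=\operatorname{tr}(Q^\top U\Gamma V^\top)=\operatorname{tr}(\Gamma\, V^\top Q^\top U)=\sum_{i=1}^\rho \gamma_i\, v_i^\top Q^\top u_i=\sum_{i=1}^\rho\gamma_i\, u_i^\top Q v_i .
\]
Here $u_i$ and $v_i$ are unit vectors, so Cauchy--Schwarz and the operator-norm bound give $|u_i^\top Q v_i|\le\|Q v_i\|_2\le\|Q\|_2\le1$. Since each $\gamma_i>0$, it follows that $\operatorname{tr}(Q^\top A)\le\sum_i\gamma_i=\|A\|_\ast$. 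This step also establishes the representation~\eqref{eq:dual-nuc-op}, once we exhibit a $Q$ that achieves equality.

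\emph{Attainment.} Suppose $\rho>0$ and set $Q^\star=UV^\top$. Then $Q^{\star\top}Q^\star=VU^\top UV^\top=VV^\top$, which is an orthogonal projection. Hence its eigenvalues lie in $\{0,1\}$, so $\|Q^\star\|_2=1\le1$ and $Q^\star$ is feasible. Evaluating the objective,
\[
\operatorname{tr}(Q^{\star\top}A)=\operatorname{tr}(VU^\top U\Gamma V^\top)=\operatorname{tr}(V\Gamma V^\top)=\operatorname{tr}(\Gamma V^\top V)=\operatorname{tr}(\Gamma)=\|A\|_\ast .
\]
So $Q^\star$ meets the upper bound and is a maximizer.

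\emph{Case $A=0$.} Here $\operatorname{tr}(Q^\top A)=0=\|A\|_\ast$ for every feasible $Q$. Every feasible point is therefore optimal, and in particular $Q=0$ is optimal.

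No step is a genuine obstacle. The only point needing care is the feasibility of $UV^\top$ when $\rho<r$: it is then a partial isometry rather than an orthogonal matrix. The projection argument above handles this case and shows why $Q^\top Q=I_r$ need not hold. We should also confirm that the $u_i^\top Q v_i$ bound uses $\gamma_i\ge0$ only to keep the inequality's direction; this holds for singular values.
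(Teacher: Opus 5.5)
Your proof is correct and matches the paper's argument: feasibility of $UV^\top$, the trace computation $\operatorname{tr}\bigl((Q^\star)^\top A\bigr)=\operatorname{tr}(\Gamma)=\|A\|_\ast$, and the trivial case $A=0$. The one difference is that you also prove the bound $\operatorname{tr}(Q^\top A)\le\|A\|_\ast$ for every feasible $Q$. The paper does not prove this; it takes the support-function identity \eqref{eq:dual-nuc-op} as a recalled fact, so your version is self-contained and establishes that identity along the way.
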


\begin{proof}
Assume first that $\rho>0$.  Since $U$ and $V$ have orthonormal columns,
$\|UV^\top\|_2=1$, so $Q^\star=UV^\top$ is feasible.  Moreover,
\[
\operatorname{tr}\!\big((Q^\star)^\top A\big)
=
\operatorname{tr}\!\big(VU^\top U\Gamma V^\top\big)
=
\operatorname{tr}(\Gamma)
=
\|A\|_\ast .
\]
Thus $Q^\star$ attains the maximum in~\eqref{eq:dual-nuc-op}.  If $A=0$, then
$\operatorname{tr}(Q^\top A)=0=\|A\|_\ast$ for every feasible $Q$, so every
feasible $Q$ is optimal.
\end{proof}

Throughout the paper, $\mathrm{polar}(A)$ denotes a selected maximizer in
\eqref{eq:dual-nuc-op}:
\[
    \mathrm{polar}(A)
    \in
    \arg\max_{\|Q\|_2\le1}
    \operatorname{tr}(Q^\top A).
\]
When $A$ has full column rank, this coincides with the usual thin polar factor
$A(A^\top A)^{-1/2}$.  When $A$ is rank deficient, the maximizer may be
nonunique and need not satisfy $Q^\top Q=I_r$.

\begin{lemma}[Global Quadratic Majorizer]
\label{lem:global_majorizer}
Let
\[
F(Y)
=
\|Y\|_F^2+\operatorname{tr}(\Sigma_\star)-2\|SY\|_\ast,
\qquad
S=\Sigma_\star^{1/2}.
\]
Fix $Y_k$, set $A_k:=SY_k$, choose an exact certificate
$Q_k:=\mathrm{polar}(A_k)$, and define $B_k:=SQ_k$.  Then, for every $Y$,
\begin{equation}
\label{eq:majorizer_app}
F(Y)
\le
\|Y-B_k\|_F^2+C_k,
\qquad
C_k:=\operatorname{tr}(\Sigma_\star)-\|B_k\|_F^2 .
\end{equation}
Moreover, the bound is tight at $Y=Y_k$.
\end{lemma}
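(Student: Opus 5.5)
The plan is to combine the support-function lower bound on the nuclear norm with a completion of squares, and then check tightness from the exactness of the certificate. The lower bound comes first. By Lemma~\ref{lem:polar_support}, the selected certificate satisfies $\|Q_k\|_2\le1$, so $Q_k$ is feasible in \eqref{eq:dual-nuc-op} applied to $A=SY$ for an arbitrary factor $Y$. This gives
\[
\|SY\|_\ast\ \ge\ \operatorname{tr}(Q_k^\top SY)=\operatorname{tr}\bigl((SQ_k)^\top Y\bigr)=\operatorname{tr}(B_k^\top Y),
\]
where the middle equality uses the symmetry $S^\top=S$ of the principal square root.

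Next I substitute this bound into $F$. Because the nuclear norm enters with coefficient $-2$, this yields
\[
F(Y)\le \|Y\|_F^2+\operatorname{tr}(\Sigma_\star)-2\operatorname{tr}(B_k^\top Y).
\]
Completing the square via $\|Y\|_F^2-2\operatorname{tr}(B_k^\top Y)=\|Y-B_k\|_F^2-\|B_k\|_F^2$ turns the right-hand side into exactly $\|Y-B_k\|_F^2+C_k$, which is \eqref{eq:majorizer_app}.

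For tightness at $Y=Y_k$, the inequality above is the only lossy step, so it suffices to show it holds with equality there. Exactness of the certificate means $\operatorname{tr}(Q_k^\top A_k)=\|A_k\|_\ast$ with $A_k=SY_k$, which is precisely the equality $\|SY_k\|_\ast=\operatorname{tr}(B_k^\top Y_k)$. Hence $F(Y_k)=\|Y_k-B_k\|_F^2+C_k$.

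I do not expect a real obstacle. The only points needing care are that the certificate need not be an isometry in rank-deficient cases, and that the argument uses only $\|Q_k\|_2\le1$ and exact attainment, never $Q_k^\top Q_k=I$. In particular the case $A_k=0$ with $Q_k=0$ is covered, since then $B_k=0$ and both sides reduce to $\|Y_k\|_F^2+\operatorname{tr}(\Sigma_\star)$.
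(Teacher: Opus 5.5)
Your proposal is correct and follows the paper's proof essentially step for step. Feasibility of $Q_k$ gives $\|SY\|_\ast\ge\operatorname{tr}(Q_k^\top SY)=\operatorname{tr}(B_k^\top Y)$ for all $Y$ (using $S^\top=S$), completing the square yields the majorizer, and exact attainment at $A_k=SY_k$ gives tightness, using only $\|Q_k\|_2\le1$ and never $Q_k^\top Q_k=I$.
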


\begin{proof}
Since $Q_k$ is feasible in~\eqref{eq:dual-nuc-op}, for every $Y$,
\begin{equation}
\label{eq:majorizer_support_ineq}
\|SY\|_\ast
\ge
\operatorname{tr}(Q_k^\top SY).
\end{equation}
Since $Q_k$ is an exact maximizer for $A_k=SY_k$, this inequality is tight at
$Y=Y_k$:
\begin{equation}
\label{eq:majorizer_support_tight}
\|SY_k\|_\ast
=
\operatorname{tr}(Q_k^\top SY_k).
\end{equation}
Using~\eqref{eq:majorizer_support_ineq} in the factor objective gives
\begin{align}
F(Y)
&=
\|Y\|_F^2+\operatorname{tr}(\Sigma_\star)-2\|SY\|_\ast
\nonumber\\
&\le
\|Y\|_F^2+\operatorname{tr}(\Sigma_\star)
-
2\operatorname{tr}(Q_k^\top SY)
\nonumber\\
&=
\|Y\|_F^2+\operatorname{tr}(\Sigma_\star)
-
2\operatorname{tr}(B_k^\top Y).
\end{align}
Completing the square,
\[
\|Y\|_F^2-2\operatorname{tr}(B_k^\top Y)
=
\|Y-B_k\|_F^2-\|B_k\|_F^2,
\]
which proves~\eqref{eq:majorizer_app}.  Tightness at $Y=Y_k$ follows from
\eqref{eq:majorizer_support_tight}.
\end{proof}

% ------------------------------------------------------------
\subsection{Sufficient BW Descent under Exact Polar Certificates}
\label{app:monotone}

\begin{proof}[Proof of Theorem~\ref{thm:monotone}]
Fix $k$.  By Lemma~\ref{lem:global_majorizer}, the exact polar certificate
induces the tight global upper bound
\[
F(Y)
\le
\|Y-B_k\|_F^2+C_k .
\]
Adding the proximal term gives
\begin{equation}
\label{eq:Uk_appendix}
U_k(Y)
:=
\|Y-B_k\|_F^2+C_k
+
\frac{1}{2\lambda}\|Y-Y_k\|_F^2 .
\end{equation}
Then
\[
F(Y)\le U_k(Y)
\quad\text{for all }Y,
\qquad
U_k(Y_k)=F(Y_k).
\]
The minimizer of $U_k$ is the update in Eq.~\eqref{eq:update}.  Since
\[
\alpha=\frac{2\lambda}{1+2\lambda},
\qquad
\frac{1}{\alpha}=1+\frac{1}{2\lambda},
\]
completion of squares gives
\begin{equation}
\label{eq:Uk_completion}
U_k(Y)
=
U_k(Y_{k+1})
+
\frac{1}{\alpha}\|Y-Y_{k+1}\|_F^2 .
\end{equation}
Evaluating~\eqref{eq:Uk_completion} at $Y=Y_k$ gives
\[
U_k(Y_k)-U_k(Y_{k+1})
=
\frac{1}{\alpha}\|Y_k-Y_{k+1}\|_F^2 .
\]
Therefore,
\[
F(Y_k)-F(Y_{k+1})
\ge
U_k(Y_k)-U_k(Y_{k+1})
=
\frac{1}{\alpha}\|Y_{k+1}-Y_k\|_F^2 .
\]
Finally, because
\[
Y_{k+1}
=
\alpha B_k+(1-\alpha)Y_k,
\]
we have
\[
Y_{k+1}-Y_k
=
\alpha(B_k-Y_k),
\]
and hence
\[
\frac{1}{\alpha}\|Y_{k+1}-Y_k\|_F^2
=
\alpha\|B_k-Y_k\|_F^2 .
\]
This proves Eq.~\eqref{eq:sufficient-descent}.  Since
$F(Y_k)=W_2^2(Y_kY_k^\top,\Sigma_\star)$, the BW objective is monotone
nonincreasing along the covariance iterates.
\end{proof}

% ------------------------------------------------------------
\subsection{Inexact Polar Certificate Bound}
\label{app:inexactpolar}

\begin{proof}[Proof of Proposition~\ref{prop:inexact-polar}]
Let $A_k:=SY_k$, and suppose $\widehat Q_k$ satisfies
\[
\|\widehat Q_k\|_2\le1,
\qquad
\operatorname{tr}(\widehat Q_k^\top A_k)
\ge
\|A_k\|_\ast-\varepsilon_k .
\]
Define the inexact surrogate
\[
\widehat U_k(Y)
:=
\|Y\|_F^2+\operatorname{tr}(\Sigma_\star)
-
2\operatorname{tr}(\widehat Q_k^\top SY)
+
\frac{1}{2\lambda}\|Y-Y_k\|_F^2 .
\]
Feasibility of $\widehat Q_k$ gives
\[
\|SY\|_\ast
\ge
\operatorname{tr}(\widehat Q_k^\top SY)
\qquad
\text{for all }Y,
\]
and therefore
\[
F(Y)\le \widehat U_k(Y)
\qquad
\text{for all }Y.
\]
At the current iterate,
\begin{align}
\widehat U_k(Y_k)
&=
\|Y_k\|_F^2+\operatorname{tr}(\Sigma_\star)
-
2\operatorname{tr}(\widehat Q_k^\top A_k)
\nonumber\\
&\le
\|Y_k\|_F^2+\operatorname{tr}(\Sigma_\star)
-
2\|A_k\|_\ast
+
2\varepsilon_k
\nonumber\\
&=
F(Y_k)+2\varepsilon_k .
\label{eq:inexact_at_current}
\end{align}

Let $Y_{k+1}=\arg\min_Y\widehat U_k(Y)$.  The same completion-of-squares
identity gives
\[
\widehat U_k(Y)
=
\widehat U_k(Y_{k+1})
+
\frac{1}{\alpha}\|Y-Y_{k+1}\|_F^2 .
\]
Setting $Y=Y_k$ and using~\eqref{eq:inexact_at_current}, we obtain
\begin{align}
F(Y_{k+1})
&\le
\widehat U_k(Y_{k+1})
\nonumber\\
&=
\widehat U_k(Y_k)
-
\frac{1}{\alpha}\|Y_{k+1}-Y_k\|_F^2
\nonumber\\
&\le
F(Y_k)+2\varepsilon_k
-
\frac{1}{\alpha}\|Y_{k+1}-Y_k\|_F^2 .
\label{eq:inexact_sharp}
\end{align}
Dropping the nonpositive last term gives
\[
F(Y_{k+1})
\le
F(Y_k)+2\varepsilon_k,
\]
which proves Eq.~\eqref{eq:inexact-main}.  When $\varepsilon_k=0$,
\eqref{eq:inexact_sharp} recovers the sufficient-descent bound.
\end{proof}

% ------------------------------------------------------------
\subsection{Full-Rank Invertible Fixed Points}
\label{app:fixedpoints}

\begin{proof}[Proof of Proposition~\ref{prop:fixedpoint}]
Assume $r=d$ and $\Sigma_\star\in\mathbb{S}_{++}^d$, and restrict attention to
invertible factor iterates.  Let $S=\Sigma_\star^{1/2}$.

First, consider any factor of the form
\[
Y=SR,
\qquad
R\in O(d).
\]
Then
\[
SY=S^2R=\Sigma_\star R .
\]
Since $\Sigma_\star$ is positive definite and $R$ is orthogonal,
$\Sigma_\star R$ is invertible, and its polar factor is
\[
\mathrm{polar}(\Sigma_\star R)
=
\Sigma_\star R
\bigl(R^\top\Sigma_\star^2R\bigr)^{-1/2}
=
\Sigma_\star R
\bigl(R^\top\Sigma_\star^{-1}R\bigr)
=
R .
\]
Substituting this into the update gives
\[
Y_{k+1}
=
\alpha SR+(1-\alpha)SR
=
SR,
\]
so every $Y=SR$ with $R\in O(d)$ is a fixed point.

Conversely, let $Y$ be an invertible fixed point.  Since $S$ is invertible,
$SY$ is invertible, and the polar factor
\[
R:=\mathrm{polar}(SY)
\]
is orthogonal.  The fixed-point equation gives
\[
Y
=
\alpha S\,\mathrm{polar}(SY)+(1-\alpha)Y .
\]
Since $\alpha\in(0,1)$, this implies
\[
Y=S\,\mathrm{polar}(SY)=SR .
\]
Therefore every invertible factor fixed point is of the form $Y=SR$ with
$R\in O(d)$.  Its covariance is
\[
YY^\top
=
SRR^\top S
=
S^2
=
\Sigma_\star .
\]
Thus the corresponding covariance fixed point in the full-rank invertible
stratum is uniquely $X=\Sigma_\star$.
\end{proof}

% ------------------------------------------------------------
\subsection{Equivariance under Consistent Certificate Selection}
\label{app:equivariance}

Because the support-function maximizer can be nonunique for rank-deficient
matrices, equivariance statements require a consistent selection of polar
certificates.  This condition is automatic when the relevant polar certificate
is unique, such as in the full-rank square case.

\begin{proposition}[Equivariance under Consistent Certificate Selection]
\label{prop:equivariance-app}
Let $U$ be an orthogonal matrix and let $c>0$.

\begin{enumerate}
\item Suppose that, under the transformed initialization
\[
\Sigma_\star^{(U)}=U\Sigma_\star U^\top,
\qquad
Y_0^{(U)}=UY_0,
\]
the polar certificates are selected consistently as
\[
Q_k^{(U)}=UQ_k
\qquad
\text{whenever } Y_k^{(U)}=UY_k .
\]
Then the covariance trajectory satisfies
\[
X_k^{(U)}=UX_kU^\top
\qquad
\text{for all }k .
\]

\item Suppose that, under the rescaled initialization
\[
\Sigma_\star^{(c)}=c\Sigma_\star,
\qquad
Y_0^{(c)}=\sqrt c\,Y_0,
\]
the polar certificates are selected consistently as
\[
Q_k^{(c)}=Q_k
\qquad
\text{whenever } Y_k^{(c)}=\sqrt c\,Y_k .
\]
Then the covariance trajectory satisfies
\[
X_k^{(c)}=cX_k
\qquad
\text{for all }k .
\]
\end{enumerate}
\end{proposition}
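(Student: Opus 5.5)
Both parts will be proved by induction on $k$, establishing the stronger factor-level statements $Y_k^{(U)}=UY_k$ and $Y_k^{(c)}=\sqrt c\,Y_k$. The covariance claims then follow immediately. For the orthogonal case, $X_k^{(U)}=UY_kY_k^\top U^\top=UX_kU^\top$. For the rescaling case, $X_k^{(c)}=c\,Y_kY_k^\top=cX_k$. The base case $k=0$ is exactly the prescribed transformed initialization in each part.

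\emph{Part 1 (orthogonal change of basis).} The first step is to identify the transformed target square root. Since $U\Sigma_\star U^\top$ is SPD and $(USU^\top)^2=US^2U^\top$ with $USU^\top\succ0$, uniqueness of the principal square root gives $S^{(U)}=USU^\top$. Assuming $Y_k^{(U)}=UY_k$, we get $S^{(U)}Y_k^{(U)}=USY_k=UA_k$. To justify the hypothesis, note that $UQ_k$ is a legitimate maximizer of the support problem for $UA_k$: indeed $\|UQ_k\|_2=\|Q_k\|_2\le1$ and $\tr((UQ_k)^\top UA_k)=\tr(Q_k^\top A_k)=\|A_k\|_\ast=\|UA_k\|_\ast$. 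The consistency assumption therefore selects a valid certificate, and we take $Q_k^{(U)}=UQ_k$. Plugging into Eq.~\eqref{eq:update} gives
\[
Y_{k+1}^{(U)}=\alpha\,USU^\top UQ_k+(1-\alpha)UY_k=UY_{k+1}.
\]
Here $\alpha$ depends only on $\lambda$, which is the same in both runs. This closes the induction.

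\emph{Part 2 (joint rescaling).} Here the principal square root of $c\Sigma_\star$ is $\sqrt c\,S$, again by uniqueness. Assuming $Y_k^{(c)}=\sqrt c\,Y_k$, we have $S^{(c)}Y_k^{(c)}=cA_k$. Since the support problem is positively homogeneous ($\arg\max$ sets of $cA$ and $A$ coincide for $c>0$), $Q_k$ is a maximizer for $cA_k$. The consistent choice $Q_k^{(c)}=Q_k$ is therefore admissible. The update then yields
\[
Y^{(c)}_{k+1}=\alpha\sqrt c\,SQ_k+(1-\alpha)\sqrt c\,Y_k=\sqrt c\,Y_{k+1},
\]
which closes the induction.

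The only delicate point is the nonuniqueness of $\mathrm{polar}(\cdot)$ for rank-deficient arguments. This is why the statement imposes consistent selection rather than deriving it. I will remark that, under the stated hypotheses, no further argument is needed beyond verifying that the prescribed certificates are genuine maximizers, so that the transformed run is a legitimate \ITSPACE{} run. I will also note that when $SY_k$ has full column rank the thin polar factor is unique and satisfies $\mathrm{polar}(UA)=U\,\mathrm{polar}(A)$ and $\mathrm{polar}(cA)=\mathrm{polar}(A)$ from the formula $A(A^\top A)^{-1/2}$, so in that case the consistency holds automatically.
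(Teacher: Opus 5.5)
Your proposal is correct and follows the paper's proof essentially step for step. Both argue by induction on the factor-level relations $Y_k^{(U)}=UY_k$ and $Y_k^{(c)}=\sqrt c\,Y_k$, identify $S^{(U)}=USU^\top$ and $S^{(c)}=\sqrt c\,S$, and substitute the consistently selected certificates into the update. Your extra check that $UQ_k$ is a genuine maximizer for $UA_k$ is a small addition beyond the paper; the paper checks admissibility explicitly only in the rescaling case, and the statement as posed does not require it.
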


\begin{proof}
For the orthogonal change of basis, the target square root transforms as
\[
S^{(U)}
=
(U\Sigma_\star U^\top)^{1/2}
=
USU^\top .
\]
Assume inductively that $Y_k^{(U)}=UY_k$.  Then
\[
S^{(U)}Y_k^{(U)}
=
USU^\top UY_k
=
U(SY_k).
\]
Under the stated consistent certificate choice, $Q_k^{(U)}=UQ_k$.  Therefore,
\[
Y_{k+1}^{(U)}
=
\alpha S^{(U)}Q_k^{(U)}+(1-\alpha)Y_k^{(U)}
=
\alpha USQ_k+(1-\alpha)UY_k
=
UY_{k+1}.
\]
Thus
\[
X_k^{(U)}
=
Y_k^{(U)}(Y_k^{(U)})^\top
=
UX_kU^\top
\]
for all $k$.

For global rescaling, the target square root is
\[
S^{(c)}
=
(c\Sigma_\star)^{1/2}
=
\sqrt c\,S .
\]
Assume inductively that $Y_k^{(c)}=\sqrt c\,Y_k$.  Then
\[
S^{(c)}Y_k^{(c)}
=
cSY_k .
\]
Since $c>0$, the support-function maximizers for $SY_k$ and $cSY_k$ coincide as
sets.  Under the stated consistent certificate choice, $Q_k^{(c)}=Q_k$.  Hence
\[
Y_{k+1}^{(c)}
=
\alpha S^{(c)}Q_k^{(c)}+(1-\alpha)Y_k^{(c)}
=
\sqrt c\,\bigl(\alpha SQ_k+(1-\alpha)Y_k\bigr)
=
\sqrt c\,Y_{k+1}.
\]
Thus
\[
X_k^{(c)}
=
Y_k^{(c)}(Y_k^{(c)})^\top
=
cX_k
\]
for all $k$.
\end{proof}

% ------------------------------------------------------------
\subsection{Matched-Rank PSD Targets and Certificate Choice}
\label{app:matched-rank-psd}

The main text assumes $\Sigma_\star\in\mathbb{S}_{++}^d$.  We record here a
matched-rank PSD extension.  The result shows that a linear contraction holds
when the target has rank $r$ and the polar certificate is chosen in the target
subspace.  The example after the proposition shows why this certificate
condition is necessary.

\begin{proposition}[Matched-Rank PSD Contraction]
\label{prop:matched-rank-psd}
Assume $\operatorname{rank}(\Sigma_\star)=r$ and write
\[
\Sigma_\star
=
U_r\Lambda U_r^\top,
\qquad
T:=U_r\Lambda^{1/2}\in\mathbb{R}^{d\times r},
\]
where $U_r^\top U_r=I_r$ and
$\Lambda\in\mathbb{R}^{r\times r}$ is positive definite.  Define
\begin{equation}
\label{eq:tildeF-psd}
\widetilde F(Y)
:=
\|Y\|_F^2+\|T\|_F^2-2\|T^\top Y\|_\ast
=
\min_{R\in O(r)}\|Y-TR\|_F^2 .
\end{equation}
Let
\[
R_k
\in
\arg\max_{R\in O(r)}
\operatorname{tr}(R^\top T^\top Y_k),
\]
and choose the exact polar certificate
\[
Q_k=U_rR_k .
\]
Then the \ITSPACE{} update satisfies
\begin{equation}
\label{eq:matched-rank-contraction}
\widetilde F(Y_{k+1})
\le
(1-\alpha)^2\widetilde F(Y_k).
\end{equation}
Consequently,
\[
\widetilde F(Y_k)
\le
(1-\alpha)^{2k}\widetilde F(Y_0).
\]
\end{proposition}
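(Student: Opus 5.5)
The plan is to run everything through the Procrustes form of $\widetilde F$ in \eqref{eq:tildeF-psd} and to show that one \ITSPACE{} step is an exact convex combination toward a fixed competitor $TR_k$.

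\textbf{Step 1: Procrustes identity.} I first justify \eqref{eq:tildeF-psd}. For any $R\in O(r)$ we have $\|TR\|_F=\|T\|_F$, so
\[
\|Y-TR\|_F^2=\|Y\|_F^2+\|T\|_F^2-2\operatorname{tr}(R^\top T^\top Y).
\]
For a square $r\times r$ matrix, the maximum of $\operatorname{tr}(R^\top M)$ over $R\in O(r)$ equals $\|M\|_\ast$, attained at $R=\mathrm{polar}(M)$ from a full SVD. This is the square case of Lemma~\ref{lem:polar_support}, where the extremal points can be taken orthogonal. Hence $\widetilde F(Y)=\min_R\|Y-TR\|_F^2$, and the minimum is attained at $R_k$ when $Y=Y_k$. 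In particular,
\[
\widetilde F(Y_k)=\|Y_k-TR_k\|_F^2 .
\]

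\textbf{Step 2: identify the update.} Here $S=\Sigma_\star^{1/2}=U_r\Lambda^{1/2}U_r^\top=TU_r^\top$. With the certificate $Q_k=U_rR_k$ we get
\[
B_k=SQ_k=TU_r^\top U_rR_k=TR_k .
\]
I also need to check that $Q_k$ is a genuine exact certificate for $SY_k=U_r(T^\top Y_k)$. First, $\|Q_k\|_2=1$. Second, $\operatorname{tr}(Q_k^\top SY_k)=\operatorname{tr}(R_k^\top T^\top Y_k)=\|T^\top Y_k\|_\ast=\|SY_k\|_\ast$, since left multiplication by $U_r$, which has orthonormal columns, preserves the nuclear norm. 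So the update \eqref{eq:update} reads
\[
Y_{k+1}=\alpha TR_k+(1-\alpha)Y_k .
\]

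\textbf{Step 3: contraction.} I bound $\widetilde F(Y_{k+1})$ using the same rotation $R_k$ as a (possibly suboptimal) competitor in the minimum:
\[
\widetilde F(Y_{k+1})\le\|Y_{k+1}-TR_k\|_F^2=(1-\alpha)^2\|Y_k-TR_k\|_F^2=(1-\alpha)^2\widetilde F(Y_k).
\]
Iterating gives $\widetilde F(Y_k)\le(1-\alpha)^{2k}\widetilde F(Y_0)$.

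\textbf{Main obstacle.} The only delicate point is Step 1's claim that the maximizer over $O(r)$ exists and achieves $\|T^\top Y_k\|_\ast$ even when $T^\top Y_k$ is singular. I would handle it by completing the compact SVD to a full square SVD, $U\Gamma V^\top$ with $U,V\in O(r)$, and taking $R=UV^\top$. It also matters that $Q_k$ lies in the range of $U_r$. That is precisely why the certificate choice is needed: an arbitrary maximizer with components outside $\operatorname{range}(U_r)$ would give $B_k\neq TR_k$ for an orthogonal $R_k$.
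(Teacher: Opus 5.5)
Your proposal is correct and takes essentially the same route as the paper. You establish the Procrustes form of $\widetilde F$, verify that $Q_k=U_rR_k$ is an exact certificate with $B_k=SQ_k=TR_k$, and then use $R_k$ as a feasible competitor in $\widetilde F(Y_{k+1})$ to extract $(1-\alpha)^2$; your completion of the compact SVD to a full one, so that $\|T^\top Y_k\|_\ast$ is attained over $O(r)$ when $T^\top Y_k$ is singular, fills in a detail the paper leaves to the phrase ``orthogonal Procrustes identity.''
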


\begin{proof}
Let
\[
S=\Sigma_\star^{1/2}=U_r\Lambda^{1/2}U_r^\top=TU_r^\top .
\]
For any $Y\in\mathbb{R}^{d\times r}$,
\[
\|SY\|_\ast
=
\|U_r\Lambda^{1/2}U_r^\top Y\|_\ast
=
\|\Lambda^{1/2}U_r^\top Y\|_\ast
=
\|T^\top Y\|_\ast ,
\]
because left multiplication by $U_r$ preserves the nonzero singular values.
Thus $\widetilde F$ is the rank-matched PSD analogue of the factor objective.

The equality in~\eqref{eq:tildeF-psd} follows from the orthogonal Procrustes
identity:
\[
\min_{R\in O(r)}\|Y-TR\|_F^2
=
\|Y\|_F^2+\|T\|_F^2
-
2\max_{R\in O(r)}\operatorname{tr}(R^\top T^\top Y)
=
\widetilde F(Y).
\]
By definition of $R_k$, $Q_k=U_rR_k$ is feasible with $\|Q_k\|_2=1$, and
\[
\operatorname{tr}(Q_k^\top SY_k)
=
\operatorname{tr}(R_k^\top U_r^\top U_r\Lambda^{1/2}U_r^\top Y_k)
=
\operatorname{tr}(R_k^\top T^\top Y_k)
=
\|T^\top Y_k\|_\ast
=
\|SY_k\|_\ast .
\]
Thus $Q_k$ is an exact support-function certificate.  Moreover,
\[
B_k
=
SQ_k
=
U_r\Lambda^{1/2}U_r^\top U_rR_k
=
TR_k .
\]
The update therefore becomes
\[
Y_{k+1}
=
\alpha TR_k+(1-\alpha)Y_k .
\]
Using the same $R_k$ as a feasible comparison in the Procrustes form of
$\widetilde F(Y_{k+1})$, we obtain
\begin{align}
\widetilde F(Y_{k+1})
&=
\min_{R\in O(r)}\|Y_{k+1}-TR\|_F^2
\nonumber\\
&\le
\|Y_{k+1}-TR_k\|_F^2
\nonumber\\
&=
\|(1-\alpha)(Y_k-TR_k)\|_F^2
\nonumber\\
&=
(1-\alpha)^2\|Y_k-TR_k\|_F^2
\nonumber\\
&=
(1-\alpha)^2\widetilde F(Y_k).
\end{align}
Iterating this inequality proves the final claim.
\end{proof}

\begin{remark}[Why the Certificate Condition Is Necessary]
\label{rem:singular-cert-counterexample}
Consider
\[
\Sigma_\star=\operatorname{diag}(1,1,0),
\qquad
r=2,
\qquad
Y_0=[e_1,\;0],
\]
where $e_i$ denotes the $i$th coordinate vector in $\mathbb{R}^3$.  Then
$S=\Sigma_\star^{1/2}=\Sigma_\star$ and
\[
SY_0=[e_1,\;0].
\]
The matrix
\[
Q_0=[e_1,\;e_3]
\]
is feasible with $\|Q_0\|_2=1$, and it is an exact support-function maximizer
because
\[
\operatorname{tr}(Q_0^\top SY_0)=1=\|SY_0\|_\ast .
\]
However,
\[
B_0=SQ_0=[e_1,\;0]=Y_0,
\]
so the update gives
\[
Y_1=\alpha B_0+(1-\alpha)Y_0=Y_0 .
\]
Thus the iteration stalls even though
\[
Y_0Y_0^\top=\operatorname{diag}(1,0,0)
\ne
\operatorname{diag}(1,1,0)=\Sigma_\star .
\]
This example shows that, in singular cases, exactness of the polar certificate
alone is not enough to guarantee the matched-rank contraction; the certificate
must also be chosen consistently with the target subspace.
\end{remark}

% ------------------------------------------------------------
\subsection{Derivation of the Factor Form}
\label{app:dc-derivation}

\begin{proof}[Derivation of Eq.~\eqref{eq:dc-form}]
Let $X=YY^\top$ and $S=\Sigma_\star^{1/2}$.  From the BW closed form
in Eq.~\eqref{eq:bures-def},
\[
W_2^2(YY^\top,\Sigma_\star)
=
\operatorname{tr}(YY^\top)+\operatorname{tr}(\Sigma_\star)
-
2\operatorname{tr}\!\left((SYY^\top S)^{1/2}\right).
\]
We have $\operatorname{tr}(YY^\top)=\|Y\|_F^2$, and
$SYY^\top S=(SY)(SY)^\top$.  Let $A:=SY$.  Then
\[
\operatorname{tr}\!\left((SYY^\top S)^{1/2}\right)
=
\operatorname{tr}\!\left((AA^\top)^{1/2}\right)
=
\|A\|_\ast
=
\|SY\|_\ast .
\]
Substituting these identities yields
\[
W_2^2(YY^\top,\Sigma_\star)
=
\|Y\|_F^2+\operatorname{tr}(\Sigma_\star)-2\|SY\|_\ast,
\]
which is Eq.~\eqref{eq:dc-form}.
\end{proof}

% ------------------------------------------------------------
\subsection{Commuting-Case Justification for the Rank-\texorpdfstring{$r$}{r} Floor}
\label{app:floor-commuting}

\begin{lemma}[Commuting Case]
\label{lem:floor-commuting}
Let $\Pi_r(\Sigma_\star)$ be the rank-$r$ truncation of $\Sigma_\star$ using the
top $r$ eigencomponents.  Then
\[
W_2^2(\Pi_r(\Sigma_\star),\Sigma_\star)
=
\sum_{i=r+1}^d \lambda_i(\Sigma_\star),
\]
where
$\lambda_1(\Sigma_\star)\ge \cdots \ge \lambda_d(\Sigma_\star)\ge0$.
\end{lemma}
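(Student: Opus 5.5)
We will prove the claim by diagonalizing both matrices in a common orthonormal eigenbasis, so that the Bures--Wasserstein formula in Eq.~\eqref{eq:bures-def} reduces to a sum of scalar terms. Write the spectral decomposition $\Sigma_\star=U\Lambda U^\top$ with $U\in O(d)$ and $\Lambda=\operatorname{diag}(\lambda_1,\dots,\lambda_d)$, $\lambda_1\ge\cdots\ge\lambda_d\ge0$. By definition $\Pi_r(\Sigma_\star)=U\Lambda_rU^\top$ with $\Lambda_r=\operatorname{diag}(\lambda_1,\dots,\lambda_r,0,\dots,0)$. The two matrices therefore commute and share the eigenvectors $U$.

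\textbf{Step 1 (reduce to the diagonal case).} By the continuous PSD extension of Eq.~\eqref{eq:bures-def}, we have
\[
W_2^2(\Pi_r(\Sigma_\star),\Sigma_\star)=\tr(\Lambda_r)+\tr(\Lambda)-2\tr\!\bigl((\Sigma_\star^{1/2}\Pi_r(\Sigma_\star)\Sigma_\star^{1/2})^{1/2}\bigr).
\]
Since $\Sigma_\star^{1/2}=U\Lambda^{1/2}U^\top$, the inner matrix equals $U\Lambda^{1/2}\Lambda_r\Lambda^{1/2}U^\top=U\operatorname{diag}(\lambda_1^2,\dots,\lambda_r^2,0,\dots,0)U^\top$. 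Its principal square root is $U\Lambda_rU^\top$, because the principal square root commutes with orthogonal conjugation and acts entrywise on nonnegative diagonal matrices.

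Alternatively, we can use the factor form in Eq.~\eqref{eq:dc-form}. Take $Y=U_r\Lambda_r'^{1/2}$, where $U_r$ consists of the first $r$ columns of $U$ and $\Lambda_r'=\operatorname{diag}(\lambda_1,\dots,\lambda_r)$. Then $SY=U_r\Lambda_r'$, so $\|SY\|_\ast=\sum_{i\le r}\lambda_i$ and $\|Y\|_F^2=\sum_{i\le r}\lambda_i$. This route avoids PSD square roots entirely, since the factor derivation in Appendix~\ref{app:dc-derivation} holds for any $Y$.

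\textbf{Step 2 (sum up).} Substituting gives
\[
\sum_{i\le r}\lambda_i+\sum_{i\le d}\lambda_i-2\sum_{i\le r}\lambda_i=\sum_{i=r+1}^d\lambda_i.
\]
Equivalently, the commuting case reduces to $\sum_i(\sqrt{a_i}-\sqrt{b_i})^2$ with $a_i=\lambda_i\mathbf 1[i\le r]$ and $b_i=\lambda_i$. Each term with $i\le r$ vanishes, and each term with $i>r$ contributes $\lambda_i$.

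\textbf{Main obstacle.} The only delicate point is justifying the closed form when $\Pi_r(\Sigma_\star)$ is singular, and when $\Sigma_\star$ is singular if only $\Sigma_\star\succeq0$ is assumed. We handle this by invoking the continuous extension stated after Eq.~\eqref{eq:bures-def}, or more cleanly by using the factor form of Eq.~\eqref{eq:dc-form}, whose derivation needs only the principal square root of a PSD matrix. Ties among eigenvalues at position $r$ do not matter: the truncation may be nonunique, but every choice gives the same value $\sum_{i>r}\lambda_i$.
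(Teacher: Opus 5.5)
Your proof is correct and follows the paper's argument: diagonalize $\Sigma_\star$ and $\Pi_r(\Sigma_\star)$ in a shared eigenbasis, so that the BW formula reduces to the scalar sum $\sum_i\bigl(\sqrt{\lambda_i(\Pi_r(\Sigma_\star))}-\sqrt{\lambda_i(\Sigma_\star)}\bigr)^2=\sum_{i>r}\lambda_i(\Sigma_\star)$. One small quibble is that the factor-form route does not bypass the singular case. Eq.~\eqref{eq:dc-form} is itself derived from Eq.~\eqref{eq:bures-def} and still uses $S=\Sigma_\star^{1/2}$, so both of your routes rest on the same PSD extension of the closed form.
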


\begin{proof}
Since $\Pi_r(\Sigma_\star)$ and $\Sigma_\star$ share eigenvectors, there exists
an orthogonal basis in which both are diagonal:
\[
\Sigma_\star
=
\operatorname{diag}(\lambda_1,\dots,\lambda_d),
\qquad
\Pi_r(\Sigma_\star)
=
\operatorname{diag}(\lambda_1,\dots,\lambda_r,0,\dots,0).
\]
For diagonal matrices, the BW closed form reduces to
\[
W_2^2(X,\Sigma_\star)
=
\sum_{i=1}^d \lambda_i(X)
+
\sum_{i=1}^d \lambda_i(\Sigma_\star)
-
2\sum_{i=1}^d
\sqrt{\lambda_i(X)\lambda_i(\Sigma_\star)} .
\]
Substituting
$\lambda_i(X)=\lambda_i(\Sigma_\star)$ for $i\le r$ and
$\lambda_i(X)=0$ for $i>r$ gives
\[
W_2^2(\Pi_r(\Sigma_\star),\Sigma_\star)
=
\sum_{i=r+1}^d \lambda_i(\Sigma_\star).
\]
\end{proof}

% ============================================================
\section{Implementation Details}
\label{app:impl}

This section documents practical choices for \ITSPACE{} and for enforcing rank budgets.

\subsection{Initialization and Factor Handling}
\label{app:init}

We maintain $Y_k\in\mathbb{R}^{d\times r}$ and materialize $X_k=Y_kY_k^\top$ only when needed.
Common initializations include:
\begin{itemize}
\item \textbf{From a PSD covariance $X_0$:} compute a square-root factor $Y_0$ via an eigendecomposition (or Cholesky when SPD).
\item \textbf{Rank-$r$ from an SPD covariance:} if $X_0\approx U_r\Lambda_r U_r^\top$ (top-$r$ eigenpairs), set $Y_0=U_r\Lambda_r^{1/2}$.
\item \textbf{From centered features:} if $X_0=\frac{1}{n-1}Z^\top Z$ with centered $Z\in\mathbb{R}^{n\times d}$, then
$Y_0=\frac{1}{\sqrt{n-1}}Z^\top$ is a valid factor without forming $X_0$ explicitly.
\end{itemize}

\subsection{Polar Computation and Numerical Stabilization}
\label{app:polar-impl}

At iteration $k$, \ITSPACE{} forms $A_k:=SY_k\in\mathbb{R}^{d\times r}$ and a certificate
$Q_k:=\mathrm{polar}(A_k)\in\arg\max_{\|Q\|_2\le 1}\mathrm{tr}(Q^\top A_k)$.

\paragraph{SVD route.}
Compute a compact SVD $A_k=U\Sigma V^\top$ and set $Q_k:=UV^\top$.

\paragraph{Gram route (when $r\ll d$).}
Form $G_k:=A_k^\top A_k\in\mathbb{S}^r_{+}$ and set $Q_k:=A_k\,G_k^{-1/2}$.
If $G_k$ is ill-conditioned, use $G_{k,\delta}:=G_k+\delta I$ and $Q_k:=A_k\,G_{k,\delta}^{-1/2}$ with $\delta>0$
(e.g., proportional to $\mathrm{tr}(G_k)/r$).

\paragraph{Feasibility normalization.}
For the inexact-certificate bound (Proposition~\ref{prop:inexact-polar}), it suffices that $\|\widehat Q_k\|_2\le 1$.
If a numerical routine returns $\|\widehat Q_k\|_2>1$, enforce feasibility by
\[
\widehat Q_k \leftarrow \widehat Q_k \Big/ \max\{1,\|\widehat Q_k\|_2\}.
\]

\paragraph{SPD vs.\ PSD in practice.}
The theory assumes $\Sigma_\star\in\mathbb{S}^{d}_{++}$. In experiments, when a baseline requires SPD and an estimated covariance is nearly singular,
we apply a small diagonal floor (Appendix~\ref{app:exp-data}).

\subsection{Certificate Gap and Diagnostics}
\label{app:stopping}

Given $A_k=SY_k$ and a feasible certificate $\widehat Q_k$,
\[
\varepsilon_k := \|A_k\|_\ast - \mathrm{tr}(\widehat Q_k^\top A_k)\ \ge\ 0.
\]
We also monitor $\|\widehat Q_k^\top \widehat Q_k-I\|_F$ (when applicable) as a numerical diagnostic.

\subsection{Complexity Breakdown}
\label{app:complexity}

\paragraph{One-time preprocessing.}
Computing $S=\Sigma_\star^{1/2}$ is $O(d^3)$ time and $O(d^2)$ memory and is performed once per instance.

\paragraph{Per-iteration update.}
Each iteration forms $A_k=SY_k$ and $SQ_k$ in $O(d^2r)$ time. The polar step costs:
(i) $O(dr^2)$ via compact SVD when $r\ll d$, or
(ii) $O(dr^2+r^3)$ via the Gram route.
Overall per-iteration update complexity is $O(d^2r+dr^2+r^3)$.

% ============================================================
\section{Baselines: Taxonomy, Guarantees, and Cost}
\label{sec:why-bw}

Table~\ref{tab:geometry_comparison} summarizes the baseline families used in this paper. In rank-budget experiments,
full-rank baselines are followed by a rank-$r$ truncation $\Pi_r(\cdot)$; the corresponding projection time is included in $t_{\mathrm{alg}}$.

\begin{table}[t]
\centering
\scriptsize
\setlength{\tabcolsep}{3pt}
\renewcommand{\arraystretch}{1.12}
\caption{\textbf{Covariance-alignment baselines: objective targeted, guarantees, and complexity.}
We indicate whether a method targets the exact Gaussian OT objective~\eqref{eq:bures-def},
whether it provides a monotonicity guarantee for the evaluated BW energy $W_2^2(\cdot,\Sigma_\star)$,
and its dense per-step cost. In rank-budget runs, full-rank methods are followed by a rank-$r$ truncation,
adding a projection term $\mathrm{proj}(d,r)$ that is counted in $t_{\mathrm{alg}}$.}
\label{tab:geometry_comparison}
\begin{tabularx}{\linewidth}{@{}l Y c c >{\raggedleft\arraybackslash}p{0.16\linewidth}@{}}
\toprule
Method & Objective / update rule & Targets Gaussian $W_2^2$? & Monotone $W_2^2(\cdot,\Sigma_\star)$? & Cost / step \\
\midrule
\textbf{\ITSPACE{}}
& Proximal MM updates for BW in a square-root factorization; polar/Procrustes certificate + damping
& Yes & Yes$^{\dag}$ & $O(d^2r + dr^2 + r^3)$ \\
BW geodesic
& Closed-form BW geodesic interpolation toward $\Sigma_\star$ (displacement interpolation), then rank-$r$ truncation in rank-budget runs
& Yes & Yes (in geodesic parameter) & $O(d^3)+\mathrm{proj}(d,r)$ \\
BW-GD (direct BW optimizer)
& BW-targeting gradient/Riemannian updates with optional backtracking, then rank-$r$ truncation in rank-budget runs
& Yes & Not in general$^{\star}$ & $O(d^3)+\mathrm{proj}(d,r)$ \\
Euclidean / Frobenius
& Ambient-space interpolation/update, then rank-$r$ truncation in rank-budget runs
& No & No & $O(d^2)+\mathrm{proj}(d,r)$ \\
Log--Euclidean
& Log-domain geodesic / Log--Euclidean objective (full-matrix functions), then truncation
& No & No & $O(d^3)+\mathrm{proj}(d,r)$ \\
AIRM
& Affine-invariant metric geodesic / AIRM objective (full-matrix functions), then truncation
& No & No & $O(d^3)+\mathrm{proj}(d,r)$ \\
CORAL
& Whitening/re-coloring (second-moment match); used as a one-shot endpoint reference in downstream
& No$^{\ddag}$ & N/A & $O(d^3)+\mathrm{proj}(d,r)$ \\
Sinkhorn (entropic OT)
& Regularized OT between samples (biased objective)
& No (regularized) & No & $O(Tnm)$ \\
Sinkhorn--Gaussian
& Regularized OT on Gaussian samples/embeddings (biased objective)
& No (regularized) & No & $O(Tnm)$ \\
\bottomrule
\end{tabularx}
\vspace{0.35em}
\caption*{\scriptsize
$^{\dag}$ In exact arithmetic with an exact polar certificate.
$^{\star}$ Gradient methods can be made descending with sufficiently small step sizes and/or line search, but do not provide an inherent monotonicity certificate for BW without additional control.
$^{\ddag}$ CORAL matches covariances in closed form in the unconstrained full-rank setting, but is not derived as a BW-descent method and does not address the few-step rank-budget regime.
Here $\mathrm{proj}(d,r)$ denotes the cost of rank-$r$ PSD truncation (implementation-dependent; included in $t_{\mathrm{alg}}$).}
\vspace{-0.4em}
\end{table}

% ============================================================
\section{Experimental Protocol and Additional Results}
\label{app:exp-more}

\subsection{Datasets and Covariance Construction}
\label{app:exp-data}

\begin{table}[t]
\caption{\textbf{Datasets and shifts.}
All datasets use covariance dimension $d{=}2048$ and, unless stated otherwise,
rank budget $r{=}16$, step budget $K{=}20$, and three seeds $\{0,1,2\}$.}
\label{tab:datasets_app}
\centering
\small
\setlength{\tabcolsep}{6pt}
\renewcommand{\arraystretch}{1.10}
\begin{tabular}{@{} l p{0.62\linewidth} @{}}
\toprule
Dataset & Shift (source $\rightarrow$ target) \\
\midrule
Camelyon17 (WILDS) \citep{Koh2021WILDS}
& hospitals (train $\rightarrow$ test) \\
VisDA-2017 \citep{VisDA2017}
& synthetic $\rightarrow$ real (train $\rightarrow$ validation) \\
Terra Incognita / CCT-20 \citep{Beery2018}
& location shift (CIS/TRANS) \\
\bottomrule
\end{tabular}
\vspace{-0.3em}
\end{table}

\paragraph{Covariance Construction.}
Given features $Z\in\mathbb{R}^{n\times d}$ whose rows are samples, we form the
centered covariance
\[
\widehat\Sigma
=
\frac{1}{n-1}(Z-\bar Z)^\top(Z-\bar Z),
\]
and symmetrize via
\[
\widehat\Sigma
\leftarrow
\frac{1}{2}(\widehat\Sigma+\widehat\Sigma^\top).
\]

\paragraph{SPD Handling When Required.}
When a baseline requires SPD input and
$\lambda_{\min}(\widehat\Sigma)\le 0$, we use
\[
\widehat\Sigma
\leftarrow
\widehat\Sigma+\delta I,
\qquad
\delta=-\lambda_{\min}(\widehat\Sigma)+10^{-8}.
\]
The same stabilization rule is applied across methods whenever it is needed.

\paragraph{Rank-\texorpdfstring{$r$}{r} Truncation.}
Rank-budgeted evaluation uses $\Pi_r(\cdot)$, the top-$r$ PSD truncation.  When
a baseline produces full-rank iterates, we apply $\Pi_r$ after each update; this
projection time is included in $t_{\mathrm{alg}}$.

\subsection{Timing Protocol}
\label{app:exp-timing}

We decompose wall-clock time into three components: update time, projection
time, and BW-evaluation time.  Projection time refers to explicit rank-$r$
truncation when used.  BW-evaluation time refers to exact BW computation used
only for logging or GAP calculation.

\paragraph{What Is Included in \(t_{\mathrm{alg}}\).}
The main-text time axis is cumulative algorithmic time
\[
t_{\mathrm{alg}}
=
t_{\mathrm{update}}+t_{\mathrm{proj}}.
\]
Method-internal checks, such as BW-GD backtracking or line-search evaluations,
are counted in $t_{\mathrm{update}}$.  For methods that produce full-rank
iterates, projection time is included because the evaluated covariance state is
required to satisfy the shared rank budget.

\paragraph{What Is Excluded from \(t_{\mathrm{alg}}\).}
Shared BW-evaluation time for logging is excluded.  One-time target
preprocessing that is constant across the inner loop, such as computing
$\Sigma_\star^{1/2}$, is also excluded unless stated otherwise.

\paragraph{Measurement Details.}
For the covariance-alignment timing in Experiment~I, runtimes are synchronized
single-GPU wall-clock times measured on an NVIDIA RTX A6000 with an AMD EPYC
7763 CPU and approximately 503 GiB RAM.  Runs are single-device and sequential
across methods and seeds, not distributed or multi-GPU.  We use
\texttt{time.perf\_counter}; for GPU runs, timing blocks are bracketed by
\texttt{torch.cuda.synchronize}.  Each configuration is run for seeds
$\{0,1,2\}$, and we report medians for timing.  CovDrift-MR adaptation times are
reported under the corresponding canonical downstream protocol and are used for
within-protocol method comparisons.

\subsection{Additional Rank-Budget Contraction Plots}
\label{app:exp-gapplots}

Figure~\ref{fig:gapcurves_appendix} provides larger-format versions of the
VisDA-2017 and Terra/CCT-20 GAP contraction plots shown in
Figure~\ref{fig:gapcurves_main}.  The protocol is the same as in the main text:
exact BW evaluator, $r{=}16$, $K{=}20$, and three seeds.

\begin{figure*}[t]
\centering
\includegraphics[width=0.47\linewidth]{flow_contraction_gap_uproj_visda2017.png}
\hfill
\includegraphics[width=0.47\linewidth]{flow_contraction_gap_uproj_terra.png}
\caption{\textbf{Larger-format rank-budget GAP contraction plots.}
Left: VisDA-2017 synthetic $\rightarrow$ real, $d{=}2048$, $r{=}16$, $K{=}20$.
Right: Terra/CCT-20 CIS/TRANS, $d{=}2048$, $r{=}16$, $K{=}20$.
Curves plot $\mathrm{gap}_r$ versus
$t_{\mathrm{alg}}=t_{\mathrm{update}}+t_{\mathrm{proj}}$, excluding shared
BW-evaluation overhead; dashed lines mark $\tau\in\{0.1,0.02\}$.}
\label{fig:gapcurves_appendix}
\vspace{-0.7em}
\end{figure*}

\begin{table}[t]
\caption{\textbf{Speedup over BW-GD across datasets.}
Ratios are computed from median $t_{\mathrm{alg}}$ values in the time-to-gap
protocol.  All settings use $r{=}16$, $K{=}20$, and three seeds.}
\label{tab:gap_speedup_app}
\centering
\scriptsize
\setlength{\tabcolsep}{6pt}
\renewcommand{\arraystretch}{1.08}
\begin{tabular}{lcc}
\toprule
Dataset & speedup at gap $\le 0.1$ & speedup at gap $\le 0.02$ \\
\midrule
Camelyon17 ($d{=}2048$) & $319.9\times$ & $239.7\times$ \\
VisDA-2017 ($d{=}2048$) & $198.2\times$ & $239.8\times$ \\
Terra/CCT-20 ($d{=}2048$) & $306.9\times$ & $310.8\times$ \\
\bottomrule
\end{tabular}
\vspace{-0.4em}
\end{table}

% ============================================================
\section{Additional Downstream Details and Results}
\label{app:downstream_covdrift}

\subsection{Downstream: CovDrift-MR Protocol}
\label{app:downstream_covdrift_mr}

\paragraph{Purpose.}
CovDrift-MR isolates second-order covariance test-time adaptation under strict
step and compute budgets.  The predictor is a fixed linear head; adaptation uses
only unlabeled target samples for moment estimation; methods differ only in the
covariance-alignment update.

\paragraph{Fixed Head and Shared Preprocessing.}
For each seed $s\in\{0,1,2\}$, we fit a source-only
\texttt{StandardScaler} on $Z_s^{\mathrm{train}}$ and train a linear classifier
on standardized source features.  The head is trained once per seed and frozen
for all methods and budgets.

\paragraph{Matched-Rank Drift.}
We inject a stationary rank-$r$ deformation into target features, with
$r=16$ and severity $s_{\max}=1.70$.  The drift is applied to
$Z_t^{\mathrm{unlab}}$ for moment estimation and to
$Z_t^{\mathrm{test}}$ for evaluation; source features are not drifted.

\paragraph{Moment Estimation and Stabilization.}
Source and target moments $(\mu_s,\Sigma_s)$ and $(\mu_t,\Sigma_t)$ are
estimated empirically from source training features and drifted unlabeled target
features.  All methods share identical stabilization: shrinkage $\gamma=0.05$
and eigenvalue floor
\[
\lambda_{\min}
=
10^{-4}\cdot\operatorname{tr}(\widehat\Sigma)/d .
\]

\paragraph{Budgeted Adaptation and Evaluation.}
Each method aligns $\Sigma_s$ toward $\Sigma_t$ under rank budget $r=16$ and
budgets $K\in\{1,2,5,20\}$.  From the aligned covariance estimate, we apply a
shared whitening--recoloring map to drifted target-test features and evaluate
with the frozen head.

\paragraph{Metrics and Timing.}
We report AUROC (\%) on Camelyon17 and accuracy (\%) on VisDA-2017 and Terra.
We report method-specific adaptation time $t_5$ at $K=5$ seconds, excluding
shared preprocessing.

\paragraph{Terra Closed-Set Evaluation.}
For Terra, we evaluate only target-test samples whose labels appear in source
training.  These labels are used only for evaluation and are handled identically
across methods.

\subsection{CovDrift-MR Results on VisDA-2017 and Terra}
\label{app:covdrift_visda}

Table~\ref{tab:covdrift_visda_terra} reports the full budget sweeps underlying
the cross-dataset downstream summary in Table~\ref{tab:covdrift_summary_main}.

\begin{table}[t]
\centering
\tiny
\setlength{\tabcolsep}{1.8pt}
\renewcommand{\arraystretch}{1.05}
\caption{\textbf{CovDrift-MR downstream on VisDA-2017 and Terra} (accuracy, \%).
Mean$\pm$std over three seeds.  $t_5$ is method-specific adaptation time at
$K{=}5$ seconds, excluding shared preprocessing.  The protocol and
hyperparameters match Table~\ref{tab:covdrift_mr_main} in the main text.}
\label{tab:covdrift_visda_terra}
% Backward-compatible aliases, if referenced elsewhere:
\label{tab:covdrift_visda}
\label{tab:covdrift_terra}
\resizebox{\linewidth}{!}{%
\begin{tabular}{lccccc ccccc}
\toprule
& \multicolumn{5}{c}{VisDA-2017 (Accuracy, \%)} & \multicolumn{5}{c}{Terra (Accuracy, \%)} \\
\cmidrule(lr){2-6} \cmidrule(lr){7-11}
Method & $K{=}1$ & $K{=}2$ & $K{=}5$ & $K{=}20$ & $t_5$ (s)
& $K{=}1$ & $K{=}2$ & $K{=}5$ & $K{=}20$ & $t_5$ (s) \\
\midrule
No adapt
& $75.68 \pm 2.18$ & $75.68 \pm 2.18$ & $75.68 \pm 2.18$ & $75.68 \pm 2.18$ & $<0.001$
& $58.89 \pm 2.53$ & $58.89 \pm 2.53$ & $58.89 \pm 2.53$ & $58.89 \pm 2.53$ & --- \\
\midrule
ITSPACE
& $83.93 \pm 1.18$ & $84.26 \pm 1.18$ & $84.26 \pm 1.18$ & $84.26 \pm 1.18$ & $0.029 \pm 0.006$
& $63.83 \pm 2.36$ & $63.94 \pm 2.43$ & $63.94 \pm 2.43$ & $63.94 \pm 2.43$ & $0.024 \pm 0.002$ \\
BW-geodesic
& $77.04 \pm 0.76$ & $78.14 \pm 0.85$ & $80.95 \pm 0.55$ & $84.26 \pm 1.18$ & $0.013 \pm 0.000$
& $59.78 \pm 2.78$ & $60.44 \pm 2.94$ & $62.00 \pm 2.62$ & $63.94 \pm 2.43$ & $0.012 \pm 0.001$ \\
BW-GD
& $81.00 \pm 1.10$ & $82.82 \pm 1.09$ & $83.82 \pm 1.28$ & $84.26 \pm 1.18$ & $0.039 \pm 0.011$
& $62.22 \pm 2.53$ & $63.56 \pm 2.68$ & $63.89 \pm 2.75$ & $63.94 \pm 2.43$ & $0.029 \pm 0.004$ \\
Euclidean
& $78.18 \pm 1.16$ & $79.57 \pm 1.34$ & $81.39 \pm 1.32$ & $84.26 \pm 1.18$ & $0.013 \pm 0.000$
& $60.72 \pm 3.08$ & $61.56 \pm 2.36$ & $62.72 \pm 2.36$ & $63.94 \pm 2.43$ & $0.012 \pm 0.000$ \\
Log-Euclidean
& $76.44 \pm 0.93$ & $77.09 \pm 1.18$ & $79.37 \pm 1.18$ & $84.26 \pm 1.18$ & $0.018 \pm 0.008$
& $59.33 \pm 2.57$ & $59.89 \pm 2.76$ & $60.78 \pm 2.77$ & $63.94 \pm 2.43$ & $0.016 \pm 0.007$ \\
AIRM
& $76.44 \pm 0.93$ & $77.09 \pm 1.18$ & $79.37 \pm 1.18$ & $84.26 \pm 1.18$ & $0.011 \pm 0.001$
& $59.33 \pm 2.57$ & $59.89 \pm 2.76$ & $60.78 \pm 2.77$ & $63.94 \pm 2.43$ & $0.011 \pm 0.001$ \\
Sinkhorn
& $83.50 \pm 1.11$ & $83.50 \pm 1.11$ & $83.50 \pm 1.11$ & $83.50 \pm 1.11$ & $0.219 \pm 0.025$
& $65.00 \pm 1.59$ & $65.00 \pm 1.59$ & $65.00 \pm 1.59$ & $65.00 \pm 1.59$ & $0.179 \pm 0.008$ \\
Sinkhorn-Gaussian
& $83.71 \pm 0.89$ & $83.71 \pm 0.89$ & $83.71 \pm 0.89$ & $83.71 \pm 0.89$ & $0.164 \pm 0.010$
& $63.56 \pm 2.61$ & $63.56 \pm 2.61$ & $63.56 \pm 2.61$ & $63.56 \pm 2.61$ & $0.164 \pm 0.004$ \\
\bottomrule
\end{tabular}%
}
\vspace{-0.4em}
\end{table}

\end{document}